\documentclass[11pt]{article}

\usepackage{amsmath,amsfonts,bm,amssymb}

\def\eqref#1{equation~\ref{#1}}

\def\1{\bm{1}}

\def\va{{\bm{a}}}
\def\vb{{\bm{b}}}
\def\vc{{\bm{c}}}
\def\vd{{\bm{d}}}

\def\vh{{\bm{h}}}

\def\vr{{\bm{r}}}
\def\vs{{\bm{s}}}

\def\vu{{\bm{u}}}
\def\vv{{\bm{v}}}

\def\vx{{\bm{x}}}
\def\vy{{\bm{y}}}

\def\vlambda{{\bm{\lambda}}}

\def\evh{{h}}

\def\evv{{v}}

\def\mA{{\bm{A}}}
\def\mB{{\bm{B}}}

\def\mH{{\bm{H}}}

\def\mM{{\bm{M}}}

\def\mT{{\bm{T}}}

\def\mV{{\bm{V}}}
\def\mW{{\bm{W}}}

\def\mZ{{\bm{Z}}}

\def\mLambda{{\bm{\Lambda}}}

\DeclareMathAlphabet{\mathsfit}{\encodingdefault}{\sfdefault}{m}{sl}
\SetMathAlphabet{\mathsfit}{bold}{\encodingdefault}{\sfdefault}{bx}{n}
\newcommand{\tens}[1]{\bm{\mathsfit{#1}}}

\def\tT{{\tens{T}}}

\def\emLambda{{\Lambda}}
\def\emA{{A}}

\def\emH{{H}}

\def\emV{{V}}
\def\emW{{W}}

\newcommand{\etens}[1]{\mathsfit{#1}}

\def\etT{{\etens{T}}}

\newcommand{\R}{\mathbb{R}}

\newcommand{\KL}{D_{\mathrm{KL}}}

\DeclareMathOperator*{\argmin}{arg\,min}

\usepackage[margin=1in]{geometry}
\usepackage{lipsum}
\usepackage{amsfonts}
\usepackage{amsthm}

\usepackage{graphicx}
\usepackage{epstopdf}
\usepackage{algorithm}
\usepackage{algorithmic}
\usepackage{comment}
\usepackage{caption}
\usepackage{subcaption}
\usepackage{hyperref}
\usepackage[nameinlink,noabbrev]{cleveref}

\crefname{equation}{}{}
\Crefname{equation}{}{}

\ifpdf
  \DeclareGraphicsExtensions{.eps,.pdf,.png,.jpg}
\else
  \DeclareGraphicsExtensions{.eps}
\fi

\usepackage{makecell}
\usepackage{soul}

\usepackage{enumitem}
\setlist[enumerate]{leftmargin=.5in}
\setlist[itemize]{leftmargin=.5in}

\newcommand{\ti}{\tilde}

\newtheorem{theorem}{Theorem}[section]

\newtheorem{lemma}[theorem]{Lemma}
\newtheorem{proposition}{Proposition}[section]
\newtheorem{definition}{Definition}[section]

\newcommand{\articletitle}{An Efficient Newton Algorithm for Nonnegative Matrix Factorization with the Kullback-Leibler Divergence}

\title{\articletitle %
}

\author{Damien Lesens\thanks{ENS de Lyon. \texttt{damien.lesens@ens-lyon.fr}}
\and Jérémy E. Cohen\thanks{Univ. Lyon, INSA-Lyon, UCBL, UJM, CNRS, Inserm, CREATIS UMR 5220, U1294, F-69621, Lyon, France. \texttt{jeremy.cohen@cnrs.fr}}
\and Bora Uçar\thanks{CNRS and LIP (UMR5668, Universit\'e de Lyon - ENS de Lyon - UCBL - CNRS - Inria), 46 all\'ee d'Italie, Lyon, France. \texttt{bora.ucar@ens-lyon.fr}}.}

\usepackage{amsopn}
\DeclareMathOperator{\diag}{diag}

\begin{document}

\maketitle

\begin{abstract}
Nonnegative Matrix Factorization (NMF) is a fundamental tool in unsupervised learning, which approximates a nonnegative matrix by the product of two low-rank nonnegative factors.
The Kullback-Leibler (KL) divergence is best suited to measure the data to model discrepancy when the decomposed data sample follows a Poisson distribution, which is the case for count datasets such as term-document matrices or images.
Most KL-NMF algorithms in the literature minimize a separable majorant of the loss to find their next iterate. 
We argue that this method has reached its limits and propose to use instead the second-order Taylor expansion of the loss, leading to a Newton-type method.
We minimize this non-separable surrogate by proposing a generalization of the well-known HALS algorithm.
This yields an efficient KL-NMF algorithm which provably converges and which competes favorably with state-of-the-art algorithms on a large variety of datasets. 

\end{abstract}

\section{Introduction}
Nonnegative Matrix Factorization (NMF) decomposes a nonnegative matrix into the product of two nonnegative low-rank factors. Given a matrix $\mV \in \R_+^{M\times N}$ and a rank $R \le \min(M,N)$, the goal is to find $\mW \in \R_+^{M\times R}$ and $\mH \in \R_+^{R\times N}$ such that $\mV \approx \mW \mH$. 
Imposing nonnegativity on the factors differentiates NMF from the standard low-rank decomposition: nonnegativity makes the problem harder, but the resulting factors are interpretable. Consequently, NMF can be used in many domains such as chemometrics \cite{bro1996multiway,neymeyr2018set}, audio source separation \cite{fevotte2009nonnegative,smaragdis2003non}, hyperspectral imaging \cite{bioucas2012hyperspectral,cichocki2009applications} and representation learning \cite{hoyer2004nmf,modi2024model}. 

The fidelity of the model $\mW \mH$ to the data $\mV$ is measured entry-wise with a scalar loss function $d: \R_+ \times \R_+ \mapsto \R_+$. NMF can thus be formulated as the optimization problem
\begin{equation}
\label{eq:NMF}
\min_{\mW \in \R_+^{M \times R}, \mH \in \R_+^{R \times N}} \sum_{i=1}^M \sum_{j=1}^N d(\mV_{i,j}|(\mW\mH)_{i,j}).\tag{NMF}
\end{equation}
NMF with the squared Frobenius loss $\|\mV-\mW\mH\|_F^2$, that is $d(x|y) = (x-y)^2$, has been studied widely in the literature \cite{gillis2020nonnegative, guo2024rise}. 
It corresponds to the maximum likelihood estimator in the presence of additive independent and identically distributed (i.i.d.) Gaussian noise. %
However, in many applications the data is generated by a counting process. 
This is the case, for example, when documents are represented by vector of word counts \cite{fevotte2011algorithms,leskovec2020mining} or when images are interpreted as a photon counting process \cite{hasinoff2021photon}. 
In those applications, the error distribution is assumed to be Poisson, in other words 
$\mV \sim \mathcal{P}(\mW^*\mH^*)$
where $\mW^*$ and $\mH^*$ are some true underlying factors.
Here, the maximum likelihood estimator corresponds (up to a constant) to the Kullback-Leibler (KL) divergence. It is defined on scalars $x,y\ge0$ as $d_{KL}(x|y) = x\log(\frac{x}{y})-x+y$ with the convention that $0  \log(0) = 0$ and $a \log(0) = -\infty$ for $a > 0$. 
This work focuses on the \cref{eq:NMF} problem with the Kullback-Leibler divergence as a loss function, \cref{eq:NMF_KL} for short. More formally, we want to solve 
\begin{equation}\label{eq:NMF_KL}
\min_{\mW \in \R_+^{M \times R}, \mH \in \R_+^{R \times N}} \KL(\mV \Vert \mW \mH)\tag{NMF-KL}
\end{equation} 
\[
=\min_{\mW \in \R_+^{M \times R}, \mH \in \R_+^{R \times N}}\sum_{i=1}^M \sum_{j=1}^N (\emV_{i,j} \log \emV_{i,j} - \emV_{i,j} ) + \sum_{i=1}^M \sum_{j=1}^N ((\mW \mH)_{i,j} - \emV_{i,j} \log(\mW \mH)_{i,j} ).
\]

The \cref{eq:NMF_KL} problem is not convex in both $\mW$ and $\mH$. It is however %
convex in $\mW$ or $\mH$. Most algorithms for \cref{eq:NMF_KL} in the literature exploit biconvexity and update $\mW$ and $\mH$ alternatively. That is to update $\mH$ they try to decrease the convex function $\mH \rightarrow \KL(\mV \Vert \mW \mH)$ over $\R_+^{R \times N}$ without changing $\mW$. This problem can be further simplified by observing that it is separable with respect to each column of $\mH$ (or equivalently with respect to each row of $\mW$). We can then focus on the update of a single column, that is find a minimizer $\vh^*$ of
the vector valued function $\vh \rightarrow \KL( \vv \Vert \mW{\vh} )$ over $\R_+^R$.
 We call this problem NonNegative KL (NNKL), in reference to the NonNegative Least Squares (NNLS) problem.
Alternating algorithms solving \cref{eq:NMF_KL} try to decrease the \cref{eq:NNKL} loss for each column of the updated factor. 

In a seminal paper, Lee and Sung \cite{lee2000algorithms} proposed the Multiplicative Updates (MU) algorithm. 
It minimizes a tight separable upper-bound of the \cref{eq:NNKL} loss to find the next iterate,
and it is still the state of the art on many datasets \cite{hien_algorithms_2021}.
The MU algorithm is an instance of the more general Majorization-Minimization (MM) framework \cite{sun2016majorization}.  %
Another MM type algorithm for \cref{eq:NNKL} 
was proposed by Bauschke et al. \cite{bauschke_descent_2017}, 
and then used to solve \cref{eq:NMF_KL} \cite{hien_algorithms_2021}. 
 Our contribution in this manuscript is two-fold. %
 We first give some theoretical results on  Majorization-Minimization algorithms \cite{lee2000algorithms, bauschke_descent_2017}. 
 We show that the MU algorithm is optimal amongst algorithms that use a global separable majorant of the loss as a surrogate. These results motivate us to propose a novel algorithm that utilizes a non-separable surrogate. 

 The most obvious candidate for a non-separable surrogate is the second-order Taylor expansion of the loss. In the case of unconstrained optimization, Newton methods minimize exactly this approximation and are very popular in second-order optimization \cite{nesterov2018lectures}.  %
 Some works on the periphery of the NMF literature \cite{hansen_newton-based_2015,virtanen_active-set_2013,zdunek_nonnegative_2007} have tried to apply these tools to the \cref{eq:NNKL} problem. There are however several issues concerning Newton steps in this context. First, it is not clear how to handle the non-negativity constraints. Projected Newton steps \cite{zdunek_nonnegative_2007} or active-set like algorithms \cite{hansen_newton-based_2015,virtanen_active-set_2013} do not provide any guarantees in general. Second, computing a Newton descent direction can be costly, even in the unconstrained case. This is a critical issue if one wishes to compete with fast first-order algorithms. Last but not least, it is not clear which step size to choose in the absence of Lipschitz smoothness hypotheses. One can always resort to backtracking, but it can be a very resource intensive process.
The work of Tran-Dinh et al. \cite{tran2015composite} provides an answer on how to handle non-negativity constraints, as well as which step size to choose. They propose a second-order optimization framework for the constraint optimization of self-concordant functions. This class of functions is not a subset of Lipschitz smooth functions and contains the \cref{eq:NNKL} loss. Our work is in some sense a specialization of their framework to \cref{eq:NMF_KL}. However the specialization is not straightforward. 
 Our main contribution is to show that minimizing the second-order Taylor expansion under nonnegativity constraints can be done efficiently with a generalization of the HALS algorithm \cite{cichocki2009fast}. Our algorithm has complexity $O(MNR^2)$ per iteration, where $M\times N$ is the size of the data matrix $\mV$ and $R$ is the target rank. All the algorithms that use a separable surrogate have a cost per iteration of $O(MNR)$. Our experimental results show that our $O(MNR^2)$ method is competitive with state-of-the-art $O(MNR)$ algorithm. 
The rest of the paper is organized as follows. First, we give in \Cref{sec:existing_algo} an overview of existing algorithms for \cref{eq:NMF_KL}, with a focus on algorithms closely related to our method. Namely, we introduce Majorization-Minimization algorithms as well as some second-order algorithms. In \Cref{sec:res_UB} we give some results that motivate us to depart from those paradigms and propose in the rest of \Cref{sec:KL_HALS} our novel algorithm. Finally, we explain in \Cref{sec:exp} the implementation details of our method and present experimental results.%

\section{Background and related work}
\label{sec:existing_algo}
\subsection{Notation}
In the rest of the manuscript, $s$ denotes a scalar, $\vv$ a vector, $\mM$ a matrix and $\tT$ a tensor. 
For a matrix $\mM$, the $i$th row is denoted by $\mM_{:,i}$ and the $j$th column by $\mM_{:,j}$. By $\mM_{-i,:}$ we denote the matrix $\mM$ with the $i$th row removed. The same goes for removing columns: $\mM_{:,-j}$ denotes the matrix $\mM$ without its $j$th column. We use the same notations for tensor slices. For example for a $3$-order tensor $\tT$, the slice for which only the last index is fixed to be $k$ is denoted by $\tT_{:,:,k}$ (yielding a matrix). The symbol $\odot$ denotes the Hadamard (entry-wise) product. The division of two matrices or vectors $\frac{\vu}{\vv}$ is  also meant element-wise. %
The data matrix $\mV$ will always have size $M\times N$, while factors $\mW$ and $\mH$ have size $M\times R$ and $R \times N$ respectively. 
\subsection{Structure of the problem}
The problem \cref{eq:NMF_KL} is convex in $\mW$ and $\mH$ separately.  In consequence most algorithms in the literature alternate between the decrease of  $\mH \rightarrow \KL(\mV \Vert \mW \mH)$ over $\R_+^{R \times N}$ and $\mW \rightarrow \KL(\mV \Vert \mW \mH)$ over $\R_+^{M \times R}$, while maintaining the other factor fixed.

Some works \cite{marmin2023joint,vandecappelle2020second} have proposed algorithms that update both $\mW$ and $\mH$ at once. We will not discuss these algorithms nor explore their performance in this work. We will be focused on alternating algorithms as they account for the vast majority of the literature, and the algorithm we propose belongs to this family of solvers.

The factor update problem can be further simplified by observing that the minimized loss is separable with respect to each column of $\mH$ (or equivalently with respect to each row of $\mW$). In consequence, we can focus on the update of a single column and solve the \cref{eq:NNKL} problem, that is find $\vh^*$ such that 
\begin{equation}
    \label{eq:NNKL}
    \vh^* = \argmin_{\vh \in \R_+^R} \KL( \vv \Vert \mW{\vh} ).\tag{NNKL}
\end{equation}
The \cref{eq:NNKL} problem, which is also called Poisson regression, is difficult to solve with first-order optimization methods such as gradient descent. There are two main reasons for this.
First, gradients of \cref{eq:NNKL} are unbounded close to zero. Second, \cref{eq:NNKL} is not strongly convex away from the origin. Classical gradient descent results \cite{nesterov2013introductory} state that Lipschitz smoothness ensures the existence of a safe step-size. Furthermore, adding strong convexity guarantees a linear convergence rate of the iterates. Without those two hypothesis, we have to resort to more involved optimization techniques to get convergence guarantees and convergence speed in practice. %

\subsection{Majorization-minimization algorithms}
\label{sec:MMalgo}
Previous works have proposed using the Majorization-Minimization framework %
(MM) \cite{sun2016majorization} for solving \cref{eq:NNKL}. 
The idea of this method is to build a separable majorant of the loss, tight at the current iterate, which is minimized exactly to get the next iterate. The tight majoration ensures the decrease of the loss. 
The separability allows for a closed-form solution for each scalar entry. This formula can then be extended to the full factor matrix, giving a fast-to-compute closed-form update. For ease of presentation, we will only give formulas for the \cref{eq:NNKL} problem, in other words, for a single column of $\mH$. All formulas immediately extend to the full factor matrix $\mH$.

One of the most popular and performant MM algorithm is the Multiplicative Updates (MU) algorithm \cite{lee2000algorithms}, which was known earlier as the Richardson-Lucy iteration \cite{lucy1974iterative,richardson1972bayesian}. 
MU is also sometimes referred to as the Maximum Likelihood Expectation-Maximization (ML-EM) algorithm \cite{dempster1977maximum,fessler1994space}.
In what follows the current iterate will be noted $\tilde{\vh}$ and the variable on which we optimize $\vh$. The idea of the algorithm is to use Jensen's inequality to upper-bound the loss as
\begin{align*}
    \KL( \vv \Vert \mW \vh ) &= \sum_{i=1}^M \evv_{i} \log \evv_{i} - \evv_{i} + (\mW \vh)_i - \evv_{i} \log(\sum_{r=1}^R \emW_{i,r} \evh_{r})\\
    &= \sum_{i=1}^M v_i \log(v_i) - v_i + (\mW \vh)_i - v_i \log(\sum_{r=1}^R \alpha_{i,r}\frac{ W_{i,r} h_r}{\alpha_{i,r}}) \text{ with } \alpha_{i,r} = \frac{W_{i,r} \ti{h}_r}{(\mW \ti{\vh})_i} \\
    &\le \sum_{i=1}^M v_i \log(v_i) - v_i + (\mW \vh)_i - v_i \sum_{r=1}^R \alpha_{i,r} \log(\frac{ W_{i,r} h_r}{\alpha_{i,r}}) = \textit{UB}_{\textit{MU}}(\vh,\ti{\vh}).\\
\end{align*}
The separable upper-bound can then be minimized in $\vh$ to get the update
\begin{equation}
    \label{eq:update_MU}
    \vh = \underset{\vh \in \R_+^R}{\arg \min}\; \textit{UB}_{\textit{MU}}(\vh,\ti{\vh}) = \ti{\vh} \odot \frac{\mW^T \frac{\vv}{\mW \ti{\vh}}}{\mW^T \1_R}.
\end{equation}

Another type of MM algorithm was introduced by Bauschke et al. \cite{bauschke_descent_2017}. As an application of their optimization results on relative smoothness and Bregman divergences the authors prove the following result

\begin{proposition}[Lemma 7 \cite{bauschke_descent_2017}]
The function $\vh \mapsto \KL( \vv \Vert \mW \vh )$ is relatively smooth to Burg's entropy $\kappa(\vh) = -\sum_{r=1}^R \log h_r$ with relative smooth constant $L = ||v||_1$. This means that for any $\vh,\ti{\vh} \in \R^R$
\[
    \KL( \vv \Vert \mW \vh ) \le \KL( \vv \Vert \mW \ti{\vh} ) + \langle \nabla_\vh \KL( \vv \Vert \mW \ti{\vh} ), \vh - \ti{\vh}\rangle + ||\vv||_1\mathcal{B}_\kappa(\vh,\ti{\vh}),
\]
with $\mathcal{B}_\kappa(\vx,\vy) := \kappa(\vx) -\kappa(\vy) - \langle \nabla \kappa (\vy), \vx-\vy \rangle$ the Bregman divergence associated with $\kappa$.
\end{proposition}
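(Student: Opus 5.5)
Relative smoothness with constant $L$ with respect to a convex reference $\kappa$ is equivalent to convexity of $L\kappa - f$ on the domain, where $f(\vh) = \KL(\vv \Vert \mW\vh)$. The plan is therefore to show that $\|\vv\|_1 \kappa - f$ is convex on $\R_{++}^R$ (with $\mW \ge 0$ and $\mW\vh>0$ there). Once this holds, the stated inequality is exactly the first-order characterization of that convexity: $g = L\kappa - f$ convex gives $g(\vh) \ge g(\ti\vh) + \langle \nabla g(\ti\vh), \vh-\ti\vh\rangle$, and rearranging yields $f(\vh) \le f(\ti\vh) + \langle\nabla f(\ti\vh),\vh-\ti\vh\rangle + L\,\mathcal{B}_\kappa(\vh,\ti\vh)$. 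Points with some zero coordinate are handled by the convention $\kappa=+\infty$, which makes the inequality trivial.

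Since $f$ and $\kappa$ are $C^2$ on the open orthant, we compare Hessians. We have $\nabla^2 \kappa(\vh) = \diag(1/h_r^2)$. For $f$, the linear terms drop out and
\[
\nabla^2 f(\vh) = \sum_{i=1}^M \frac{v_i}{(\mW\vh)_i^2}\, \mW_{i,:}^T \mW_{i,:}.
\]
Hence it suffices to show, for every $\vu\in\R^R$,
\[
\sum_{i} v_i \frac{(\mW_{i,:}\vu)^2}{(\mW_{i,:}\vh)^2} \le \Big(\sum_i v_i\Big) \sum_r \frac{u_r^2}{h_r^2}.
\]

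The key step, and the only real one, is the per-row bound $\frac{(\sum_r W_{i,r}u_r)^2}{(\sum_r W_{i,r}h_r)^2} \le \sum_r \frac{u_r^2}{h_r^2}$. To prove it, write $\sum_r W_{i,r}u_r = \sum_r \alpha_{i,r} \frac{u_r}{h_r}$ with weights $\alpha_{i,r} = W_{i,r}h_r/(\mW\vh)_i$, which are nonnegative and sum to one. These are the same weights as in the MU derivation. Jensen's inequality (convexity of $t\mapsto t^2$) then gives
\[
\Big(\sum_r \alpha_{i,r}\tfrac{u_r}{h_r}\Big)^2 \le \sum_r \alpha_{i,r}\tfrac{u_r^2}{h_r^2} \le \sum_r \tfrac{u_r^2}{h_r^2},
\]
since $\alpha_{i,r}\le 1$. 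Multiplying by $v_i\ge0$ and summing over $i$ gives the Hessian inequality $\nabla^2 f \preceq \|\vv\|_1 \nabla^2\kappa$, so $L\kappa - f$ is convex.

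The main obstacle is only the domain bookkeeping. The Hessian comparison is valid where $\mW\vh>0$ and $\vh>0$. The statement "for any $\vh,\ti\vh\in\R^R$" must be read on the domain of $\kappa$ (positive orthant), with the extended-value conventions covering the boundary. Rows with $(\mW\vh)_i=0$ cannot occur for $\vh>0$ unless $\mW_{i,:}=0$, in which case that row contributes a constant and can be discarded.
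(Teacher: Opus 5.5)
The paper does not prove this statement; it imports it as Lemma 7 of Bauschke et al. Your proof is correct, and it is essentially the argument of that reference. On the open orthant, Jensen for $t\mapsto t^2$ with weights $\alpha_{i,r}=W_{i,r}h_r/(\mW\vh)_i$, followed by $\alpha_{i,r}\le 1$, gives $\nabla^2 f\preceq\|\vv\|_1\nabla^2\kappa$. The first-order characterization of convexity of $\|\vv\|_1\kappa-f$ then yields the inequality.

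Three small fixes are needed:
\begin{enumerate}
\item The identity should read $\sum_r W_{i,r}u_r/(\mW\vh)_i=\sum_r\alpha_{i,r}u_r/h_r$; you dropped the denominator.
\item A zero row $\mW_{i,:}=0$ contributes a constant only when $v_i=0$. For $v_i>0$ the loss is identically $+\infty$, so you need the standing assumption that no such row exists; Bauschke et al. make this assumption.
\item The extended-value convention only handles boundary points $\vh$. The point $\ti{\vh}$ must lie in the open orthant for $\nabla\kappa(\ti{\vh})$ to exist.
\end{enumerate}

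The paper also implicitly contains a different proof.
\begin{itemize}
\item The Jensen majorization of \Cref{sec:MMalgo} gives $\KL(\vv\Vert\mW\vh)\le\textit{UB}_{\textit{MU}}(\vh,\ti{\vh})$.
\item The computation in \Cref{apdx:proof_comp_UB} shows that $\textit{UB}_{\textit{MU}}$ equals the first-order expansion at $\ti{\vh}$ plus $\sum_i v_i\sum_r\alpha_{i,r}\,\phi(h_r/\ti{h}_r)$. Here $\alpha_{i,r}=W_{i,r}\ti{h}_r/(\mW\ti{\vh})_i$ and $\phi(x)=x-\log x-1\ge 0$.
\item Likewise $\|\vv\|_1\mathcal{B}_\kappa(\vh,\ti{\vh})=\sum_i v_i\sum_r\phi(h_r/\ti{h}_r)$.
\end{itemize}
Hence $\KL(\vv\Vert\mW\vh)\le\textit{UB}_{\textit{MU}}(\vh,\ti{\vh})\le\textit{UB}_{\textit{Burg}}(\vh,\ti{\vh})$, which is exactly the claim.

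That route uses the same kind of weights and the same bound $\alpha_{i,r}\le 1$, but applies Jensen to $-\log$ rather than to $t^2$. It works with function values directly and needs no second derivatives or convexity characterization. It also shows that the Burg inequality is just a weakening of the MU majorant. Your route gives instead the stronger structural fact that $\|\vv\|_1\kappa-f$ is convex, which is the form of relative smoothness Bauschke et al. work with.
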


Let us note
\[
    \textit{UB}_{\textit{Burg}}(\vh,\ti{\vh}) = \KL( \vv \Vert \mW \ti{\vh} ) + \langle \nabla_\vh \KL( \vv \Vert \mW \ti{\vh} ), \vh - \ti{\vh}\rangle + ||\vv||_1\mathcal{B}_\kappa(\vh,\ti{\vh}).
\]
Optimising on this tight separable upper-bound gives a closed-form multiplicative update similar to \cref{eq:update_MU} that we call MU Burg. 
This update was applied by Hien and Gillis \cite{hien_algorithms_2021} to solve the \cref{eq:NMF_KL} problem. They observe experimentally that the resulting NMF algorithm is consistently worse than other state-of-the-art algorithms. In particular, it is always outperformed by MU. We show in \Cref{prop:comp_UB} that its poor performance can be explained theoretically by a comparison of the upper-bound the two algorithms use.

\subsection{Second-order algorithms}
\label{sec:second_order_algo}

A collection of works \cite{pham2026second,hsieh2011fast,li2012fast, lin2020optimization,hien_algorithms_2021} have been using surrogate that incorporate second-order information. A straightforward way to build a second-order surrogate is to use the second-order Taylor expansion of the \cref{eq:NNKL} loss function. It approximates the objective function as
\[\KL( \vv \Vert \mW \vh ) \approx \KL( \vv \Vert \mW \ti{\vh} ) + \nabla_\vh \KL( \vv \Vert \mW \ti{\vh} )^T (\vh - \ti{\vh}) + \frac{1}{2} (\vh - \ti{\vh})^T \nabla^2_\vh \KL( \vv \Vert \mW \ti{\vh} )(\vh - \ti{\vh}).\]

In order to get a separable function from this formula, Pham et al. \cite{pham2026second} proposed to upper-bound the Hessian by a diagonal matrix of the form $\text{Diag}(\va)$ with $\va \in \R_+^R$, %
leading to a separable surrogate of the form
\[\KL( \vv \Vert \mW \vh ) \simeq  \KL( \vv \Vert \mW \ti{\vh} ) + \nabla_\vh \KL( \vv \Vert \mW \ti{\vh} )^T (\vh - \ti{\vh}) + \frac{1}{2} \sum_{r=1}^R a_r (h_r - \ti{h}_r)^2.\]
The surrogate is not a global upper-bound and depends on the choice of vector $\va$. 
Pham et al. discuss which $\va$ to choose in order to get good performance and convergence guarantees.

Another way to get a separable surrogate that exploits the second-order information is to use the second-order Taylor expansion with respect to a single entry in $\vh$. %
The minimization of this surrogate with the constraint $h_r\ge 0$ gives a closed-form formula for the next iterate. Several works \cite{hsieh2011fast,li2012fast, lin2020optimization} use this surrogate to update entries of $\mH$ in a cyclic way.
The most notable algorithm in this category is the Cyclic Coordinate Descent (CCD) algorithm \cite{hsieh2011fast}. It cycles through the entries of $\mH$ in a way that allows for a fast computation of first and second-order coefficients, making it a state-of-the-art algorithm for \cref{eq:NMF_KL}. 
Hien and Gillis \cite{hien_algorithms_2021} later proposed a variation of the CCD algorithm that they called Scalar Newton (SN). 
The SN algorithm is a slight modification of CCD that ensures that the loss decreases at each update. The proof of Hien and Gillis uses the self-concordance properties of the loss function and the convergence framework of Tran-Dinh et al. \cite{tran2015composite}.

A collection of works do not use separable surrogates, but rather employ classical Newton optimization tools (that can be found for example in \cite{nesterov2018lectures}). Virtanen et al. \cite{virtanen_active-set_2013} proposed an algorithm called ASNA that solves \cref{eq:NNKL} in the over-complete case (in other words $R\gg N$). Their algorithm is akin to an active set algorithm for NNLS and incorporates some Newton steps. In a similar fashion, Hansen et al. \cite{hansen_newton-based_2015} use Newton and quasi-Newton steps in an active-set-like algorithm for tensor decomposition with KL loss. Finally, Zdunek and Cichocki \cite{zdunek_nonnegative_2007} use projected Newton and quasi-Newton steps with damped Hessians. %
Finally, we would like to mention that not all works on \cref{eq:NNKL} rely on surrogate functions. The theory on duality \cite{rockafellar1997convex} has also been used to handle the linear term $\mW \vh$ inside the logarithm. The two main contribution on this method are a first-order primal dual algorithm proposed by Yanez and Bach \cite{yanez_primal-dual_2014} and an application of the ADMM algorithm to \cref{eq:NMF_KL} by Sun and Févotte \cite{sun_alternating_2014}.

\subsection{HALS algorithm}
The Hierarchical Alternating Least Square algorithm (HALS) \cite{cichocki2009fast} serves as the basis for our proposed algorithm. The HALS algorithm solves alternatively for $\mW$ and $\mH$ the optimization problem
\[\min_{\mH \in \R_+^{R \times N}} ||\mV - \mW \mH ||_F^2.\]
It minimize the loss with respect to each row of $\mH$ alternatively, in other words it solves
\[\min_{\vh \in \R_+^{N}} ||(\mV- \mW_{:,-r} \mH_{-r,:}) - \mW_{:,r} \vh^T ||_F^2,\]
where $\vh$ stands for the $r$th row of $\mH$. The update of the $r$th row has the 
closed form 
\[\mH_{r,:} = \max\left(\mH_{r,:}+\frac{ (\mW^T \mV)_{r,:} - (\mW^T \mW)_{r,:} \mH }{(\mW^T \mW)_{r,r}},0\right).\]
This formula can be applied multiple times for each row of $\mH$ efficiently by pre-computing the matrices $\mW^T \mV$ and $\mW \mW^T$. The total cost of the update of $\mH$ is $O(MNR + I R^2 N)$ where $I$ is the number of times each row of $\mH$ is updated.
\Cref{apdx:details_HALS} contains details and a pseudo-code for the HALS algorithm.

\section{KL-HALS: The proposed algorithm based on HALS}
\label{sec:KL_HALS}
Here, we present the proposed algorithm for \cref{eq:NMF_KL}. 
We first study two known upper-bounds used in MM algorithms (\Cref{sec:res_UB}) and we show the optimality of the MU upper-bound.
Based on this finding, we propose to use a
quadratic approximation of the loss function (\Cref{sec:so_approx}). 
The quadratic surrogate is minimized using a generalization of the HALS algorithm (\Cref{sec:gen_HALS}). 
Alternating the minimization for the two factors yields the proposed algorithm, which we call KL-HALS (\Cref{sec:kl_hals_algo}). Finally, by using the right step size our algorithm is proven to converge (\Cref{sec:cvg_results}).

\subsection{On separable upper-bounds}
\label{sec:res_UB}
In this section, we discuss the upper-bounds for MU and MU Burg introduced in \Cref{sec:MMalgo}. %
We first show that the upper-bound used by MU Burg is always larger than the one used by MU.

\begin{proposition}
    \label{prop:comp_UB}
    For all $\ti{\vh}$, $\vh$ non-negative
    \[\text{UB}_{\text{MU}}(\vh,\ti{\vh}) \le \text{UB}_{\text{Burg}}(\vh,\ti{\vh}).\]
\end{proposition}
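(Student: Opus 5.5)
The plan is to write both upper-bounds in a common coordinate system, namely the ratios $t_r = h_r/\ti{h}_r$, and show that their difference is a nonnegative combination of the scalar functions $\phi(t) = t-1-\log t \ge 0$. Throughout I assume $\ti{\vh}$ has positive entries and $\mW\ti{\vh}>0$, which is needed for both surrogates to be defined. If some $h_r=0$, the two sides should be read in the extended sense, and I will check separately that the inequality still holds.

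\emph{Rewriting $\text{UB}_{\text{MU}}$.} Set $x_i = (\mW\ti{\vh})_i$. With $\alpha_{i,r} = W_{i,r}\ti{h}_r/x_i$ we get $W_{i,r}h_r/\alpha_{i,r} = x_i t_r$. Since $\sum_r \alpha_{i,r}=1$, this gives
\[
\sum_r \alpha_{i,r}\log\Big(\frac{W_{i,r}h_r}{\alpha_{i,r}}\Big) = \log x_i + \sum_r \alpha_{i,r}\log t_r .
\]
Define $\beta_r = \sum_i v_i\alpha_{i,r} \ge 0$. Summing over $i$ and noting that $\sum_r\beta_r = \sum_i v_i = \|\vv\|_1$, we obtain
\[
\text{UB}_{\text{MU}}(\vh,\ti{\vh}) = \KL(\vv\Vert\mW\ti{\vh}) + \sum_i (\mW\vh-\mW\ti{\vh})_i - \sum_r \beta_r \log t_r .
\]

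\emph{Rewriting $\text{UB}_{\text{Burg}}$.} The gradient is $\nabla_\vh\KL(\vv\Vert\mW\ti{\vh})_r = \sum_i W_{i,r}(1 - v_i/x_i)$. Using $\sum_i W_{i,r}v_i/x_i = \beta_r/\ti{h}_r$, the linear term becomes
\[
\sum_i(\mW\vh-\mW\ti{\vh})_i - \sum_r \beta_r (t_r-1).
\]
The Burg Bregman divergence is $\mathcal{B}_\kappa(\vh,\ti{\vh}) = \sum_r \big(t_r - 1 - \log t_r\big) = \sum_r \phi(t_r)$.

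\emph{Subtracting.} In the difference, the constant $\KL(\vv\Vert\mW\ti{\vh})$ and the term $\sum_i(\mW\vh-\mW\ti{\vh})_i$ cancel. We are left with
\[
\text{UB}_{\text{Burg}}-\text{UB}_{\text{MU}} = \|\vv\|_1\sum_r\phi(t_r) - \sum_r\beta_r\big((t_r-1)-\log t_r\big) = \sum_r \big(\|\vv\|_1-\beta_r\big)\,\phi(t_r).
\]
Each $\beta_r$ is nonnegative and they sum to $\|\vv\|_1$, so $0\le\beta_r\le\|\vv\|_1$. Together with $\phi\ge 0$, every summand is nonnegative, which gives the claim.

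\emph{Main obstacle.} The only delicate point is the bookkeeping that makes both bounds collapse onto the same weights $\beta_r$. The key identity is $\sum_i W_{i,r}v_i/x_i = \beta_r/\ti{h}_r$, which links the Burg gradient term to the MU Jensen weights. Once that identity is in place, the rest is routine.

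\emph{Boundary case $h_r=0$.} Here $\log t_r = -\infty$ and $\phi(t_r)=+\infty$. If $\beta_r>0$, then $\text{UB}_{\text{MU}}=+\infty$. In that case $\text{UB}_{\text{Burg}}=+\infty$ as well, since it carries the term $\|\vv\|_1\phi(t_r)$ and $\|\vv\|_1\ge\beta_r>0$, so the inequality holds with both sides infinite. If $\beta_r=0$, then $\text{UB}_{\text{MU}}$ stays finite while $\text{UB}_{\text{Burg}}$ is $+\infty$ whenever $\vv\neq 0$. If $\vv=0$, both bounds reduce to the same linear function and are equal.
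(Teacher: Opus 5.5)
Your proposal is correct and is essentially the paper's proof: both write each bound as $\KL(\vv\Vert\mW\ti{\vh})$ plus the gradient term plus a weighted sum of $\phi(h_r/\ti{h}_r)$. The weight is $\|\vv\|_1=\sum_i v_i$ for MU Burg and $\sum_i v_i\alpha_{i,r}$ (your $\beta_r$) for MU, and both conclude from $\alpha_{i,r}\le 1$ and $\phi\ge 0$. Aggregating into $\beta_r$ is only cosmetic, and your separate handling of $h_r=0$ is a small addition the paper omits, since it only argues for $h_r>0$.
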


\begin{proof}
    See \Cref{apdx:proof_comp_UB}.
\end{proof}

To the best of our knowledge this result was not known before. 
It explains why MU Burg consistently under-performs compared to MU \cite{hien_algorithms_2021}%
. Indeed, it is straightforward to see that the updates of MU will always produce a greater loss decrease than those of MU Burg. This rules out the MU Burg algorithm and makes one wonder whether it is possible to get a tighter separable upper-bound of the loss than MU. We give a negative answer to this question in the following proposition.

\begin{proposition}
\label{prop:MU_tight}
    Let $g_r : \R_+^R \mapsto \R_+$ be a collection of functions such that for all $\vh, \ti{\vh} \in \R_+^R$ it holds that
    \[\KL( \vv \Vert \mW \vh ) \le \sum_{r=1}^R g_r(h_r) \le \text{UB}_{\text{MU}}(\vh,\ti{\vh}).\]
    Then for all $\vh, \ti{\vh} \in \R_+^R$
    \[\sum_{r=1}^R g_r(h) = \text{UB}_{\text{MU}}(\vh,\ti{\vh}).\]
    In other words, $\text{UB}_{\text{MU}}$ is the tightest separable global majorant of the NNKL loss function.
\end{proposition}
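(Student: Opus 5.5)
The statement is to be read with $\ti{\vh}$ fixed and the $g_r$ allowed to depend on $\ti{\vh}$. The plan is to show that $G(\vh) := \sum_{r} g_r(h_r) - \text{UB}_{\text{MU}}(\vh,\ti{\vh})$ vanishes identically. By hypothesis $G \le 0$ everywhere, and $G$ is separable, since $\text{UB}_{\text{MU}}$ itself is separable in $\vh$:
\[\text{UB}_{\text{MU}}(\vh,\ti{\vh}) = c + \sum_{r} u_r(h_r), \qquad u_r(h_r) = (\mW^T\1)_r h_r - \sum_i v_i\alpha_{i,r}\log\frac{W_{i,r}h_r}{\alpha_{i,r}}.\]
So $G(\vh) = \sum_r \delta_r(h_r)$ with $\delta_r = g_r - u_r$, up to distributing the constant $c$ among the terms.

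The key structural fact is that $\text{UB}_{\text{MU}}$ is tight along a whole family of curves, not just at $\ti{\vh}$. Jensen's inequality in the derivation of $\text{UB}_{\text{MU}}$ is an equality exactly when, for each $i$, the ratios $W_{i,r}h_r/\alpha_{i,r}$ are constant in $r$ over the support of $\alpha_{i,\cdot}$. Since $\alpha_{i,r}\propto W_{i,r}\ti h_r$, this holds whenever $h_r = t\,\ti h_r$ for a common scalar $t>0$. Hence on the ray $\vh = t\ti{\vh}$ we get
\[\KL(\vv\Vert \mW\vh) = \text{UB}_{\text{MU}}(\vh,\ti{\vh}),\]
and the sandwich hypothesis forces $G(t\ti{\vh}) = 0$, i.e. $\sum_r \delta_r(t\ti h_r)=0$ for all $t>0$.

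The main obstacle is that a single ray gives only one relation among the $R$ functions $\delta_r$. Tightness along one curve does not by itself force each $\delta_r\le 0$ to vanish pointwise off that curve. To get more, I would exploit that $\ti{\vh}$ ranges over all of $\R_+^R$. If the $g_r$ may depend on $\ti{\vh}$ (as the quantifier order suggests), then for each fixed $\ti{\vh}$, separability together with $\sum_r\delta_r\le0$ and $\sum_r\delta_r(t\ti h_r)=0$ gives the following. Fix $r$ and a value $s>0$, choose $t=s/\ti h_r$, and vary the other coordinates freely. Then $\delta_r(s) + \sum_{q\neq r}\delta_q(h_q)\le 0$, with equality at $h_q = t\ti h_q$, so each $\delta_q$ attains its maximum over $\R_+$ at $t\ti h_q$. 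Applying this for every $t$ shows each $\delta_q$ is constant on $(0,\infty)$, equal to its maximum $m_q$. Thus $\sum_q m_q = 0$ and $G\equiv 0$ on the positive orthant.

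Finally, I would extend the equality to the boundary (coordinates $h_r=0$, or $\ti h_r=0$ where $\alpha_{i,r}=0$) by continuity or by treating $+\infty$ values consistently. The claimed identity $\sum_r g_r(h_r) = \text{UB}_{\text{MU}}(\vh,\ti{\vh})$ then follows.
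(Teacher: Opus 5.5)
Your proof is correct and is essentially the paper's argument: tightness of Jensen's inequality on the ray $\{t\ti{\vh} : t\ge 0\}$ (Lemma~\ref{lem:tightonline}), followed by the fact that a nonpositive separable function vanishing on that ray has constant summands. The paper proves the latter as Lemma~\ref{lem:gap_is_0} after substituting $h_r = x_r\ti h_r$ to turn the ray into the diagonal, which is exactly your coordinatewise-maximizer argument. As your own argument shows, a single fixed $\ti{\vh}$ suffices, so there is no need to let $\ti{\vh}$ vary over $\R_+^R$. The boundary points ($h_r=0$ or $\ti h_r=0$) are also glossed over in the paper, but note that continuity of the $g_r$ is not assumed, so your proposed continuity extension does not apply as stated.
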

\begin{proof}
    See \Cref{apdx:proof_MU_tight}
\end{proof}

This result motivates the search for non-separable and/or non-local surrogates of the loss function, as we know that there is no hope to find a global separable majorant of the loss tighter than MU.

\subsection{Second-order approximation}
\label{sec:so_approx}

We propose to use as a surrogate of the loss its second-order Taylor expansion with respect to all variables. 
To the best of our knowledge no other work on \cref{eq:NMF_KL} has proposed an algorithm that can minimize a non-separable second-order surrogate function %
efficiently. %
Let $\ti{\mW}$ and $\ti{\mH}$ denote the current factors. 
For the $j$th column of $\mH$, the second-order Taylor expansion of the loss writes as
\begin{align*}
\KL(\mV_{:,j} \Vert \ti{\mW} \mH_{:,j})
&\approx
\KL(\mV_{:,j} \Vert \ti{\mW} \ti{\mH}_{:,j})
+ \nabla_{\mH_{:,j}}
\KL(\mV_{:,j} \Vert \ti{\mW} \ti{\mH}_{:,j})^T
(\mH_{:,j}-\ti{\mH}_{:,j})
\\
& 
+ \frac{1}{2}
(\mH_{:,j}-\ti{\mH}_{:,j})^T
\nabla_{\mH_{:,j}}^2
\KL(\mV_{:,j} \Vert \ti{\mW} \ti{\mH}_{:,j})
(\mH_{:,j}-\ti{\mH}_{:,j})
\\
&=
C_j %
+ \bigl(\ti{\mW}^T(\mathbf{1}
- 2\tfrac{\mV_{:,j}}{\ti{\mW}\ti{\mH}_{:,j}})\bigr)^T
\mH_{:,j}
+ \frac{1}{2}
\mH_{:,j}^T
\ti{\mW}^T
\operatorname{diag}\!\left(
\frac{\mV_{:,j}}{(\ti{\mW}\ti{\mH}_{:,j})^2}
\right)
\ti{\mW}
\mH_{:,j},
\end{align*}
with $C_j$ a constant. Summing for all columns and removing constants, we obtain the surrogate optimization problem
\begin{equation}
\label{eq:surrogate_opti_pb}
    \min_{\mH\in\R_+^{R\times N}} \sum_{j=1}^N (\ti{\mW}^T(\1 - 2\frac{\mV_{:,j}}{\ti{\mW} \ti{\mH}_{:,j}}))^T\mH_{:,j} + \frac{1}{2}\mH_{:,j}^T \ti{\mW}^T \text{diag}(\frac{\mV_{:,j}}{(\ti{\mW} \ti{\mH}_{:,j})^2}) \ti{\mW} \mH_{:,j}.
\end{equation}

\subsection{Optimizing quadratics with nonnegativity constraints}
\label{sec:gen_HALS}
The HALS algorithm \cite{cichocki2009fast} minimizes efficiently a sum of quadratics with nonnegativity constraints:
\begin{equation}
\label{eq:hals_as_sum}
\min_{\mH \in \R_+^{R \times N}} ||\mV - \ti{\mW} \mH ||_F^2 = \text{constant} + \min_{\mH \in \R_+^{R \times N}} \sum_{j=1}^N - 2 (\ti{\mW}^T \mV_{:,j})^T \mH_{:,j} + \mH_{:,j}^T (\ti{\mW}^T \ti{\mW}) \mH_{:,j}.
\end{equation}
Observe that the only difference between the structures of problems \cref{eq:surrogate_opti_pb} and \cref{eq:hals_as_sum} is that in \cref{eq:hals_as_sum} 
the Hessians depends on the index $j$. Therefore, we propose a generalization of the HALS algorithm that minimizes a sum of quadratics with nonnegativity constraints and \textbf{different second-order terms}. We will use this algorithm to solve the %
problem \cref{eq:surrogate_opti_pb}. 

Let $\mA \in \R^{R\times N}$ be a matrix, and $\tT \in \R^{R \times R \times N}$ be a tensor with non-negative entries. Our algorithm solves %
\begin{equation}
    \label{eq:gen_HALS_pb}
    \min_{\mH \in \R_+^{R \times N}} \sum_{j=1}^N {\mA_{:,j}}^T\mH_{:,j} + \frac{1}{2} \mH_{:,j}^T \tT_{:,:,j} \mH_{:,j}. 
\end{equation}%
This problem is equivalent to minimizing a collection of multivariate quadratics with nonnegativity constraints. The matrix $\mA$ stores the first-order coefficients, and the tensor $\tT$ stores the Hessian of each quadratic form in its slices. We get \cref{eq:surrogate_opti_pb} by instantiating $\mA$ and $\tT$ as
\begin{equation}
\label{eq:AT_KL}
    \mA = \ti{\mW}^T(\1 - 2\frac{\mV}{\ti{\mW} \ti{\mH}}) \text{ and } \tT_{:,:,j} = \ti{\mW}^T \text{diag}(\frac{\mV_{:,j}}{(\ti{\mW} \ti{\mH}_{:,j})^2}) \ti{\mW} \text{ for all } j=1,\dots,N,
\end{equation}
and we get \cref{eq:hals_as_sum} with
$\mA =  - 2 \ti{\mW}^T \mV$ and $\tT_{:,:,j} = \ti{\mW}^T \ti{\mW} $ for all $ j=1,\dots,N$.

In a similar fashion to HALS, we optimize on the rows of $\mH$ alternatively. The update for the $r$th row has the following closed-form expression
\[ \text{\textbf{{GHALS:}}} \quad \emH_{r,j} = \max\left(\emH_{r,j} - \frac{\emA_{r,j} + \sum_{r'=1}^R \tT_{r,r',j} \emH_{r',j}}{\tT_{r,r,j}},0\right).\]
For details on how we obtain these updates see \Cref{apdx:details_general_HALS}.  We call our algorithm GHALS for \underline{G}eneralized HALS. 

The pseudo-code of our GHALS algorithm is presented in \Cref{alg:update_H} starting at Line \ref{line:GHALS}. 
Given the first-order coefficients in $\mA$ and second-order coefficients in $\tT$, it solves \cref{eq:gen_HALS_pb} for $\mH$. 
The cost of the algorithm is $O(IR^2 N)$, where $I$ is the number of times we cycle through each row of $\mH$. The entry-wise product of the two $R\times N$ matrices $\tT_{r,:,:}$ and  $\mH$ at Line \ref{line:update_H} costs $O(RN)$. Summing over columns of $\tT_{r,:,:} \odot \mH$ also costs $O(RN)$. The other operations at Line \ref{line:update_H} are performed in $O(N)$ cost. This line is repeated $I R$ times, yielding a total complexity of $O(IR^2N)$. %

\subsection{Algorithm}
\label{sec:kl_hals_algo}

\begin{algorithm}
    \caption{
    $\text{update-KL-HALS}(\mV,\ti{\mW},\ti{\mH},I)$
    }\label{alg:update_H}
    \hspace*{\algorithmicindent} \textbf{Inputs:} Data matrix $\mV$, current factors $\ti{\mW}$ and $\ti{\mH}$, number of GHALS iterations $I$ \\ %
    \hspace*{\algorithmicindent} \textbf{Output:} New factor $\mH$
    \begin{algorithmic}[1]
        \STATE $\blacktriangleright$\textbf{Compute $\mA$ and $\tT$ }
        \STATE $\mA := \ti{\mW}^T(\1 - 2\frac{\mV}{\ti{\mW} \ti{\mH}})$
        \STATE $\mB := (\text{vec}({\ti{\mW}_{i,:}}^T \ti{\mW}_{i,:}))_{i=1}^M =(\ti{\mW}[:,:,\texttt{None}] \odot \ti{\mW}[:,\texttt{None},:] ).\texttt{reshape}(M,R^2)$ \label{line:compute_B}
        \STATE $\tT := (\mB^T \frac{\mV}{(\ti{\mW} \ti{\mH})^2}).\texttt{reshape}(R,R,N)$ \label{line:comp_T}
        \STATE $\blacktriangleright$\textbf{Generalized HALS (GHALS) algorithm to solve \cref{eq:gen_HALS_pb}} \label{line:GHALS}
        \STATE $\mH := \ti{\mH}$
        \FOR{$k=1$ to $I$}
            \FOR{$r=1$ to $R$}
                \STATE $\mH_{r,:} := \max(\mH_{r,:} - (\mA_{r,:}+\text{sum\_columns} (\tT_{r,:,:} \odot \mH))/(\tT_{r,r,:}),0)$ \label{line:update_H}
            \ENDFOR
        \ENDFOR
        \STATE $\blacktriangleright$\textbf{Update $\mH$}
        \STATE $\Delta \mH := \mH-\ti{\mH}$ \COMMENT{Compute the descent direction\
        }
        \STATE Choose a step size per column $\vc \in \R_+^N$, see \Cref{sec:cvg_results} \label{line:KL-HALS:step_size_selection}
        \STATE $\mH := \ti{\mH} + \Delta \mH \text{diag}(\vc)$ \COMMENT{Update $\mH$ along the descent direction}
        \RETURN $\mH$
    \end{algorithmic}
\end{algorithm}

In this section, we sum up the main steps of our proposed algorithm. 
The pseudo-code for the update of $\mH$ is given in \Cref{alg:update_H}. 
We use the GHALS algorithm presented in the last section to solve the surrogate optimization problem \cref{eq:surrogate_opti_pb}. At each factor update we compute $\mA$ and $\tT$ as in \cref{eq:AT_KL}. 
The computation of $\tT$ can be performed efficiently with standard matrix operations  (see Lines \ref{line:compute_B} and \ref{line:comp_T} where we use \texttt{numpy} notations). %

The runtime of \Cref{alg:update_H} is $O(MNR^2 + INR^2)$. The first term $O(MNR^2)$ comes from computing the tensor $\mT$, which stores $N$ matrices of size $R\times R$. The $j$th matrix in $\tT$ is obtained by computing the weighted sum of the $M$ matrices $(\ti{\mW}_{i,:}^T \ti{\mW}_{i,:}))_{i=1}^M$ of size $R \times R$ with coefficients in $\frac{\mV_{:,j}}{(\ti{\mW} \ti{\mH}_{:,j})^2}$ (see \cref{eq:AT_KL}). This operation costs $O(MR^2)$ and is repeated $N$ times, yielding a total complexity of $O(MNR^2)$. We then apply the GHALS algorithm which costs $O(INR^2)$. Finally we update $\mH$ along the computed descent direction %
at Line \ref{line:KL-HALS:step_size_selection}. 
We use a step size that will guarantee convergence (see \Cref{sec:cvg_results}). This part of the algorithm has a negligible cost.

The proposed algorithm for the \cref{eq:NMF_KL} problem is presented in \Cref{alg:KL-HALS}. %
\Cref{alg:update_H} is applied $D$ times to $\mH$ before alternating with $\mW$.

\begin{algorithm}
    \caption{KL-HALS for solving \cref{eq:NMF_KL}}\label{alg:KL-HALS}
    \hspace*{\algorithmicindent} \textbf{Inputs:} Data matrix $\mV$, initial factors $\mW^0$ and $\mH^0$, number of outer iterations $K_{\max}$, number of inner iterations $D$, number of GHALS iterations $I$
    \begin{algorithmic}[1]
        \STATE $\mW,\mH := {\mW}^0,{\mH}^0$
        \FOR{$k=1$ to $K_{\max}$}
        \FOR{$n_{\text{inner}}=1$ to $D$}
            \STATE $\mH := \text{update-KL-HALS}(\mV,\mW,\mH,I)$
        \ENDFOR

        \FOR{$n_{\text{inner}}=1$ to $D$}
            \STATE $\mW := \text{update-KL-HALS}(\mV^T,\mH^T,\mW^T,I)^T$
        \ENDFOR
        \ENDFOR

        \RETURN $\mW,\mH$
    \end{algorithmic}
\end{algorithm}

\subsection{Convergence results}
\label{sec:cvg_results}

In this section we give some convergence guarantees on \Cref{alg:KL-HALS}. 
We show that our algorithm is a realization of the framework of Tran-Dinh et al. \cite{tran2015composite}. Their results guarantee that with the right step size at Line \ref{line:KL-HALS:step_size_selection} of 
\Cref{alg:update_H} each factor update will decrease the \cref{eq:NMF_KL} loss.

Let us start with a definition.
\begin{definition}[\textbf{Self-concordant functions} \cite{nesterov1994interior}]
    A convex function $f: \R^n \rightarrow \R$ is said to be self-concordant with parameter $M\ge 0$ if $|\varphi'''(t)| \le M \varphi''(t)^{3/2}$, where $\varphi(t):=f(\vs+t\vv)$ for all $t\in\R$, $\vx \in \text{dom}(f)$ and $\vv\in \R^n$ such that $\vx+t\vv \in \text{dom}(f)$. When $M=2$, the function is said to be standard self-concordant.
\end{definition}

The loss \cref{eq:NNKL} is self-concordant with parameter $M_{\text{NNKL}} = 2 \cdot \max \{ \frac{1}{\sqrt{v_i}} | v_i >0, i=1,\dots,M \}$ \cite{nesterov2013introductory}. This implies that $\vh \rightarrow M_{\text{NNKL}}^2/4 \cdot  \KL( \vv \Vert \mW{\vh} )$ %
is standard self-concordant. 
Tran-Dinh et al.~\cite{tran2015composite} propose three algorithms to solve the problem
\begin{equation}
    \min_{\vs \in \R^n} F(\vx) = f(\vx) + g(\vx), 
\end{equation}
with $f$ convex and standard self-concordant, $g:\R^n \rightarrow \R \cup \{+\infty\}$ proper, closed and convex. Typically, $f$ will be a loss and $g$ a non-smooth regularization term.
We get the \cref{eq:NNKL} problem by taking $f(\vh) := M_{\text{NNKL}}^2/4\cdot \KL(\vv|\mW\vh)$ and $g(\vh) := \delta_{\vu|\vu\ge0}(\vh) = 0$ if $h\ge0$ and $+\infty$ otherwise. %

Our proposed algorithm KL-HALS is an instance of the Proximal-Newton Method (Algorithm 1 in \cite{tran2015composite}). A Newton direction $\vd^k$ is found at iteration $k$ by first computing
\begin{equation}
    \label{eq:opti_quad_approx}
    \vs^k = \underset{\vx \in \R^n}{\argmin} \left( f(\vx^k)+\nabla f(\vx^k)^T(\vx-\vx^k) + \frac{1}{2}(\vx-\vx^k)^T \nabla^2 f(\vx^k)(\vx-\vx^k) + g(\vx)\right),
\end{equation}
where $\vx^k$ is the current iterate, and then by setting $\vd^k = \vs^k-\vx^k$. This is exactly the problem we solve for each column of $\mH$ with our GHALS algorithm. The next iterate is then a point found by taking a step $\alpha_k$ along the Newton direction, in other words $\vx^{k+1}:=\vx^k+\alpha_k \vd^k $. 

Tran-Dinh et al. prove that their algorithm follows an usual pattern in Newton-like algorithms, namely that there are two modes of convergence. If the current iterate is close to the optimal, we are in the basin of quadratic attraction: %
full Newton steps with $\alpha_k=1$ guarantee convergence with a quadratic rate. %
When outside of the basin of quadratic attraction, we have to perform damped Newton steps with an $\alpha_k$ less than $1$. 
The proximity to an optimal point is measured through the quantity $\lambda_k = \sqrt{\vd_k^T \nabla^2 f(\vx_k) \vd^k}$, called the Newton-decrement. 
The Proximal-Newton Method performs full Newton steps ($\alpha_k = 1$) when $\lambda_k$ is less than a constant $\sigma$, and damped Newton steps ($\alpha_k < 1$) otherwise (we refer to the original work \cite{tran2015composite} for more details).

We apply the factor update algorithm only a fixed number of times and we do not wait until convergence to a minimum. Hence, we cannot use the convergence rate results. Instead, we utilize the following proposition by Tran-Dinh et al. which shows that the loss \cref{eq:NNKL} decreases when damped Newton steps $\alpha_k = 1/(1+\lambda_k)$ are used. 

\begin{proposition}[parts of Theorem 6 \cite{tran2015composite}]
    \label{prop:descent_selfconc}
    If $\alpha_k = 1/(1+\lambda_k) \in (0,1]$, then the damped Newton step $\vx^{k+1}:=\vx^k+\alpha_k \vd^k$ satisfies
    \[F(\vx^{k+1}) \le F(\vx^k) - \omega(\lambda_k),\]
    where $\omega : \R \rightarrow \R_+$ is defined as $\omega(t):=t-\ln(1+t )$.
\end{proposition}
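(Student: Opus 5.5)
The plan is the standard damped-Newton argument for self-concordant functions, adapted to the composite setting. Write $\vx^+ = \vx^k + \alpha \vd^k$ with $\alpha\in(0,1]$ and $\lambda_k = \|\vd^k\|_{\vx^k}$, the local norm induced by $\nabla^2 f(\vx^k)$.

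\textbf{Step 1: upper bound on $f$.} Because $f$ is standard self-concordant, for any direction with $\alpha\lambda_k<1$ we have the classical inequality
\[
f(\vx^+) \le f(\vx^k) + \alpha\nabla f(\vx^k)^T \vd^k + \omega_*(\alpha\lambda_k),\qquad \omega_*(t) = -t-\ln(1-t).
\]
This is obtained by integrating the bound $\varphi''(t)\le \varphi''(0)/(1-t\sqrt{\varphi''(0)})^2$ twice along the segment. The same argument shows that $\vx^+$ stays in $\mathrm{dom} f$. With $\alpha = 1/(1+\lambda_k)$ we get $\alpha\lambda_k = \lambda_k/(1+\lambda_k)<1$, so the inequality applies.

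\textbf{Step 2: control of $g$ and of the linear term.} Convexity of $g$ gives $g(\vx^+) \le (1-\alpha)g(\vx^k) + \alpha g(\vs^k)$. For the nonnegativity indicator, the convex combination of two nonnegative points is nonnegative, so this holds trivially.

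Next use the optimality condition of \cref{eq:opti_quad_approx}: $-\nabla f(\vx^k) - \nabla^2 f(\vx^k)\vd^k \in \partial g(\vs^k)$. Pair this subgradient inequality with the point $\vx^k$, which lies in $\mathrm{dom}\, g$. This yields
\[
g(\vx^k) \ge g(\vs^k) - \nabla f(\vx^k)^T(\vx^k-\vs^k) - (\vd^k)^T\nabla^2 f(\vx^k)(\vx^k-\vs^k),
\]
which rearranges to
\[
\nabla f(\vx^k)^T\vd^k + g(\vs^k) - g(\vx^k) \le -\lambda_k^2 .
\]

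\textbf{Step 3: combine and optimise in $\alpha$.} Adding the bounds from Steps 1 and 2 gives
\[
F(\vx^+) \le F(\vx^k) - \alpha\lambda_k^2 + \omega_*(\alpha\lambda_k) = F(\vx^k) - \alpha\lambda_k^2 - \alpha\lambda_k - \ln(1-\alpha\lambda_k).
\]
Substituting $\alpha = 1/(1+\lambda_k)$ gives $\alpha\lambda_k^2 + \alpha\lambda_k = \lambda_k$ and $1-\alpha\lambda_k = 1/(1+\lambda_k)$. The decrease is therefore exactly $\lambda_k - \ln(1+\lambda_k) = \omega(\lambda_k)$, as claimed. Note that $\alpha = 1/(1+\lambda_k)$ is precisely the minimiser of the right-hand side in $\alpha$.

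\textbf{Main obstacle.} The delicate step is Step 1, that is, establishing the self-concordant upper bound together with domain feasibility (for NNKL, that $\mW\vx^+$ remains positive where $v_i>0$). I would cite Nesterov's Theorem 4.1.8 rather than rederive it. A second caveat is $\lambda_k=0$: then $\vd^k$ satisfies the optimality conditions, the step is trivial, and the claim holds with equality.
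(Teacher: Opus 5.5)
Your proposal is correct and follows the paper's proof essentially step for step. The shared steps are: the optimality condition of the quadratic subproblem, paired with the subgradient inequality at $\vx^k$, to get $\nabla f(\vx^k)^T\vd^k + g(\vs^k)-g(\vx^k)\le -\lambda_k^2$; the self-concordant upper bound with $\omega_*$; convexity of $g$; and then evaluating $F(\vx^k)+\omega_*(\alpha_k\lambda_k)-\alpha_k\lambda_k^2$ at $\alpha_k=1/(1+\lambda_k)$. You also check $\vx^{k+1}\in\text{dom}(f)$, which the paper leaves implicit.

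The only blemish is your separate $\lambda_k=0$ caveat. It is unnecessary, since the main argument already covers that case with $\alpha_k=1$ and $\omega_*(0)=0$. Its claim that the step is then trivial with equality can also fail when $\nabla^2 f(\vx^k)$ is singular, which the paper notes can happen for NNKL. In that case $\lambda_k=0$ does not force $\vd^k=0$, and $F$ may strictly decrease.
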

    A proof is given in \Cref{apdx:self-conc} for self-containment.

We propose two versions of \Cref{alg:update_H}:
\begin{itemize}
    \item \underline{KL-HALS-descent:} This version performs damped Newton steps with $\alpha_k = 1/(1+\lambda_k)$. It requires the computation of the Newton decrement $\lambda_k$. The cost of computing this quantity for all rows is $O(NR^2)$ ($O(R^2)$ repeated $N$ times). 
This cost is negligible compared to computing the descent direction and the tensor $\tT$. The use of the step size $1/(1+\lambda_k)$ %
in \Cref{alg:update_H} %
ensures that the loss decreases at each factor update. The loss is positive and decreases at each factor update so it converges to a limit point. 
    \item \underline{KL-HALS:} This version always performs full Newton steps with $\alpha_k=1$, even outside of the basin of quadratic convergence. It is a heuristic which makes the assumption that we will make more progress with larger step sizes, 
 even though we do not have any descent guarantee. We will see in \Cref{sec:exp} that this assumption is verified: on all tested instances KL-HALS converges faster than KL-HALS-descent.
\end{itemize}

KL-HALS-descent is proven to converge, but we do not have any guarantee on the point the algorithm converges to (e.g. local minimum, stationary point). Razaviyayn et al. \cite{razaviyayn2013unified} proposed the Block Successive Convex Approximation (BSCA) framework, which ensures the converge to a stationary point for factor update algorithms similar to KL-HALS. However, we could only make KL-HALS a true realization of BSCA by making some strong hypothesis on the problem. We thus leave the discussion on the links between KL-HALS and the BSCA framework to the supplementary material in \Cref{sec:bsca}.

Another aspect that can help second-order algorithms to converge in practice is a good initial point. We will discuss in \Cref{sec:scaling} how scaling the rows and columns of the initial model is a very efficient way to start close to the basin of attraction.

\section{Experimental results}
\label{sec:exp}

In this Section we investigate the experimental performance of our algorithms with respect to other state-of-art algorithms on a large collection of datasets. We first specify some implementation details in \Cref{sec:hyperparam,sec:code_sparse,sec:scaling}. We then list the algorithms we studied in \Cref{sec:exp_algo}. Finally, we present experimental results for real-life datasets (\Cref{sec:exp_real}) and synthetic datasets (\Cref{sec:exp_synth}).

\subsection{Choice of hyper-parameters}
\label{sec:hyperparam}
In \Cref{alg:KL-HALS} the value of hyper-parameters $I$ (the number of iterations of the proposed GHALS algorithm), and $D$ (the number of times a factor is updated before switching to the other one) needs to be specified. We find empirically that $D=1$ is the best choice: %
when applying \Cref{alg:KL-HALS} multiple times %
on the same factor the loss decrease is not worth the additional computing time. In other words, we make the most progress with the first update of a factor and we are better of switching to the other one after that. 
For the GHALS parameter $I$, we find that taking $I=5$ is enough, that is after cycling $5$ times through all ranks we have found a minimizer of the second-order Taylor expansion.

\subsection{Code for sparse matrices}
\label{sec:code_sparse}

Care must be taken to implement algorithms for large sparse datasets. 
Indeed, all state-of-the-art algorithms have to compute quantities $\frac{\mV}{\mW \mH}$ and/or $\frac{\mV}{(\mW\mH)^2}$ for the gradient and the Hessian of the cost function. 
Consider for example the MU \cite{lee2000algorithms} and AmSOM \cite{pham2026second} updates. 
\[
    \text{MU: }\mH = \mH \cdot \frac{\mW^T \frac{\mV}{\mW \mH}}{\mW^T \1_{M,N}} \text{ ; AmSOM: } \mH=\max(\mH+\gamma\frac{\mW^T \frac{\mV}{\mW\mH}- \mW^T \1_{M,N}}{(\mW^T \odot {\1}_{R,R} \mW^T)\frac{\mV}{(\mW\mH)^2}},\epsilon).
\]
These quantities also appear in our KL-HALS algorithm (see \Cref{alg:update_H}). %
For large and sparse matrices, 
computing the whole $\mW \mH$ matrix, which has $M\times N$ entries, is prohibitive. It is the case for example for document datasets~\cite{hien_algorithms_2021}, where large-scale experiments cannot be performed if a normal implementation is used.  

We overcome forming the product $\mW\mH$ by only computing the entries of $\mW \mH$ that correspond to non-zeros in $\mV$, whose number (that we note $\text{nnz}(\mV)$) can be orders of magnitude smaller than $MN$. 
This yields a sparse representation of $\frac{\mV}{\mW \mH}$ that can be handled efficiently. 
This way, our code only manipulates sparse matrices and the complexity of algorithms goes, for example, from $O(MNR)$ for dense matrices, to $O(\text{nnz}(\mV)R)$ for sparse matrices. 
This allows us to run an extensive evaluation of almost all state-of-the-art algorithm on a collection of large sparse matrices (see \Cref{sec:exp_sparse}). 
We used the \texttt{scipy} library \cite{2020SciPy-NMeth} and more specifically the \texttt{scipy.sparse} module to handle sparse matrices and sparse linear algebra. %

The only algorithms that we could not adapt to handle sparse matrices are the ones that use duality, namely FPA \cite{yanez_primal-dual_2014} and ADMM \cite{sun_alternating_2014}. 
These algorithms require the manipulation of a dual variable of size $M\times N$ that cannot be assumed to be sparse.  
In consequence, we have excluded these algorithms from our evaluation on sparse datasets.

\subsection{Initialization with optimal row and column scaling}
\label{sec:scaling}
Second-order methods need a good initial point to ensure convergence. Hien and Gillis \cite{hien_algorithms_2021} proposed to scale the initial model $\mW^0 \mH^0$ with a scalar coefficient such that the loss is minimal. The optimal coefficient admits a closed-form solution. 
Pham et al. \cite{pham2026second} proposed to scale the columns of $\mH$, as they observe that the optimization problem
\[\min_{\mLambda \in \R_+^{N\times N} \text{diagonal}} \KL(\mV \Vert \mW \mH \mLambda)\] has a closed-form solution $\mLambda$ defined for all $j \le N$ as
$\emLambda^*_{j,j} = \frac{\1^T \mV_{:,j}}{\1^T \mW \mH_{:,j}}.$
In other words, the scaling needs to make the column sums of $\mW \mH$ equal to the column sums of $\mV$.
The authors do not elaborate on the problem of finding an optimal scaling of both rows and columns, that is solving
\begin{equation}
\label{eq:opti_scaling}
    \min_{\mLambda_\mW \in \R_+^{M\times M}, \mLambda_\mH \in \R_+^{N\times N} \text{diagonal}} \KL(\mV \Vert \mLambda_\mW \mW \mH \mLambda_\mH).
\end{equation}
We  solve \cref{eq:opti_scaling} by alternatively scaling row and column sums of $\mW \mH$ until they match the ones of $\mV$ with Sinkhorn's algorithm \cite{sinkhorn1967concerning}.
We note that the optimal row/column scaling problem for \cref{eq:NMF_KL} was first mentioned in \cite{ho2008non}.

In all the experiments presented in \Cref{sec:exp} we use Sinkhorn's algorithm on the randomly generated initial factors to get a good starting point $\mW^0$,$\mH^0$. Applying Sinkhorn's algorithm for $J$ iterations costs $O(\text{nnz}(\mV)+J(M+N)R)$ for sparse matrices or $O(MN+J(M+N)R)$ for dense matrices. Taking $J=10$ is usually enough to get row and column sums equal to the right value up to machine precision. It thus stands as a very cheap initialization procedure that can be applied before running any \cref{eq:NMF_KL} algorithm. More details on the procedure are presented in \Cref{apdx:opti_scaling}.

\subsection{Algorithms}
\label{sec:exp_algo}

We compare our two algorithms KL-HALS and KL-HALS-descent with the state-of-the-art algorithms given below:
\begin{itemize}
    \item The MU algorithm \cite{lee2000algorithms}, see \Cref{sec:MMalgo};
    \item The First-order Primal-dual Algorithm (FPA) \cite{yanez_primal-dual_2014};
    \item The Cyclic-Coordinate Descent (CCD) algorithm \cite{hsieh2011fast}, see \Cref{sec:second_order_algo};
    \item The Scalar-Newton (SN) algorithm \cite{hien_algorithms_2021}, see \Cref{sec:second_order_algo};
    \item The AmSOM algorithm \cite{pham2026second}, see \Cref{sec:second_order_algo};
    \item The AMUSOM algorithm \cite{pham2026second}, which was proposed along AmSOM as an accelerated version of the MU algorithm;
    \item The HALS algorithm \cite{cichocki2009fast}, which is designed for the $L_2$-loss and serves as a baseline.
\end{itemize}
We exclude the MU Burg algorithm \cite{bauschke_descent_2017} from our experimental study as our theoretical results of \Cref{sec:res_UB} show that it will always under-perform the MU algorithm. %
We also exclude the ADMM algorithm \cite{sun_alternating_2014} as it was observed in \cite{hien_algorithms_2021} that it is unstable and does not converge on almost all tested datasets. %
The three works \cite{virtanen_active-set_2013,hansen_newton-based_2015,zdunek_nonnegative_2007} mentioned in \Cref{sec:second_order_algo} %
do not provide any convergence guarantee and the computational cost of their iterates is very expansive. For these reasons we will not study them here. %

We implemented all the above listed algorithms in \texttt{python}. For all algorithms we provide an implementation for dense matrices that uses the linear algebra subroutines of the \texttt{numpy} library. We also provide for all algorithms except FPA a code that handles large sparse matrices efficiently by using the \texttt{scipy.sparse} library and by making use of the remarks of \Cref{sec:code_sparse}. We use the \texttt{benchopt} library \cite{benchopt} for reproducibility and to make the handling of experimental results easier.
Our code is publicly available\footnote{\url{https://github.com/DamienLesens/benchmark_nmf_kl}}, but we do not provide all the datasets as some of them are under a license. However, we specify how to download those datasets and integrate them in the benchmark. For all datasets, we run all algorithms starting from the same 10 random initial points. %

\subsection{General comments on experimental results}
Overall our KL-HALS algorithm exhibits very good performance on most of the datasets we tested. It is amongst the bests if not the best algorithm, except on the MAPS and Verb dataset. %
For these datasets in particular we ran algorithms with increasing ranks, and the convergence of KL-HALS started to get slower for $R$ around $50$. %

KL-HALS-descent is consistently slower than KL-HALS. 
On a large portion of datasets it even struggles to make sufficient progress. This behavior is also observed for the SN algorithm which uses the same self-concordance properties. This suggest that the descent guarantees are an useful tool in theory but they do not translate to fast algorithms in practice. %
KL-HALS also consistently beats the CCD algorithm. This clearly indicates that taking into account the full second-order information is better than only using the second-order information with respect to single variables.

On the datasets we tested, the FPA algorithm is either amongst the worst algorithm or does not converge at all. We note that this algorithms does not come with any descent nor convergence guarantee (see Hien and Gillis \cite{hien_algorithms_2021} for a discussion).

We confirm the observations of previous studies \cite{hien_algorithms_2021,pham2026second} by observing that MU, AmSOM and AMUSOM are state-of-the-art algorithms that exhibit fast convergence on all datasets. %

\subsection{Real-life datasets} %
\label{sec:exp_real}
Real-life datasets are of three types: audio, image and document. The datasets that we use are listed in \Cref{tab:real_data_sets} along with their source, dimension, rank, and the time we let algorithms run on them. %

\subsubsection{Audio datasets}

Audio datasets consists of amplitude spectrograms obtained from two music recordings. %
For the first music excerpt we follow the methodology of Pham et al. \cite{pham2026second} and test algorithms on  the ``MAPS\_MUS-bach\_846\_AkPnBcht" excerpt taken from the MAPS music dataset \cite{emiya2010maps} for ranks $R \in \{2,11,23,45\}$. Results are presented in \Cref{fig:res_MAPS}.

\begin{figure}[h!]
    \centering
    \includegraphics[width=0.9\linewidth]{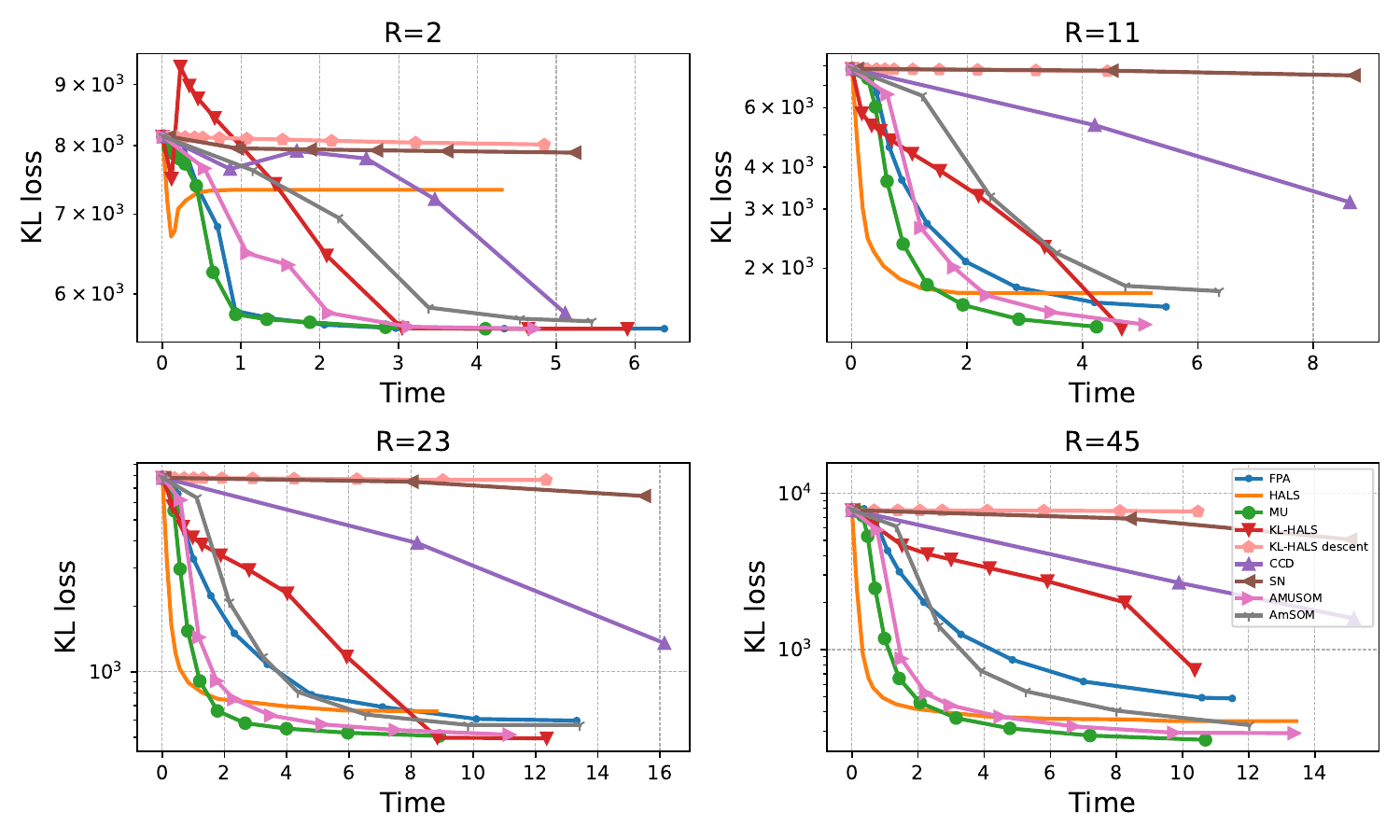}
    \caption{Median value of the KL loss on the amplitude spectrogram of an item of the MAPS dataset, for $R \in \{2,11,23,45\}$.}
    \label{fig:res_MAPS}
\end{figure}

We make the following remarks on the results of \Cref{fig:res_MAPS}:%
\begin{itemize}
    \item KL-HALS-descent and SN fail to descent sufficiently. It appears that for this dataset the step size that uses self-concordance properties is too conservative. %
    \item KL-HALS also struggles: for $R=2$ the loss does not always decrease. For all tested ranks it reaches the best loss, but at a slowest rate than other algorithms. For $R=45$, we stopped it before it could converge to the best loss. This is a consequence of the $R^2$ dependency of the iterates. %
    \item CCD converges too slowly to be competitive with other methods.
    \item MU exhibits the best behavior%
    : it always converges to the best loss the fastest.
    \item FPA follows MU closely for $R=2$ but its performance gets worse has $R$ grows. It remains a competitive algorithm on this dataset.
    \item HALS converges fast but to a higher loss than \cref{eq:NMF_KL} algorithms.
    \item AMUSOM stands out as the second best algorithm after MU.
    \item AmSOM struggles to converge but is amongst the best algorithms. %
\end{itemize}

The second excerpt was used for evaluation of NMF algorithms by Févotte et al. \cite{fevotte2009nonnegative}. It is a 108 seconds-long music excerpt from My Heart (Will Always Lead Me Back To You) recorded by Louis Armstrong and His Hot Five in the twenties. Results are presented in \Cref{fig:res_MyHeart} for $R=10$. For this dataset KL-HALS clearly outperforms all other algorithms by converging fast to the best loss value. Regarding other algorithms we can make the same remarks than for \Cref{fig:res_MAPS}.

\begin{figure}[h!]
    \centering
    \includegraphics[width=0.5\linewidth]{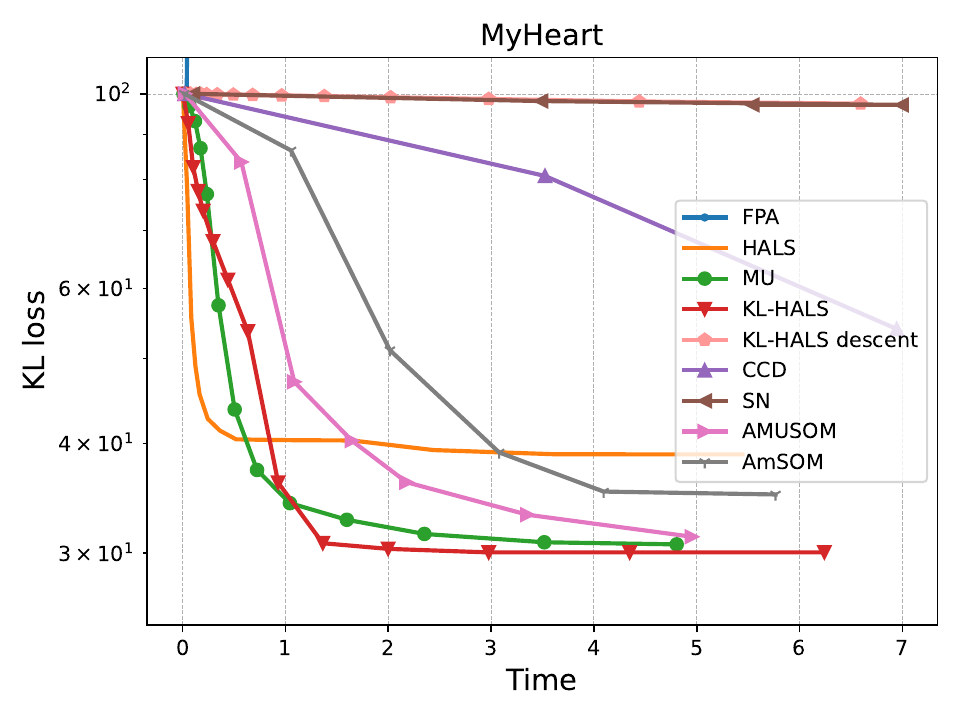}
    \caption{Median value of the KL loss on the amplitude spectrogram of an 108 seconds-long music excerpt from My Heart (Will Always Lead Me Back To You).}
    \label{fig:res_MyHeart}
\end{figure}

\subsubsection{Image datasets}

We run all algorithms on four image datasets listed in \Cref{tab:real_data_sets}. MIT-CBCL-faces \cite{sung96}, ORLfaces \cite{samaria1994parameterisation} and Frey \cite{frey_faces} are collections of face pictures of multiple people taken with different angles and/or lighting. Each picture is vectorized to form the column of the data matrix. The Urban dataset is an hyperspectral image widely used in the literature to showcase the performance of NMF methods on blind spectral unmixing tasks. Results for the four image datasets are presented in \Cref{fig:res_image}.

\begin{figure}[h!]
    \centering
    \includegraphics[width=0.9\linewidth]{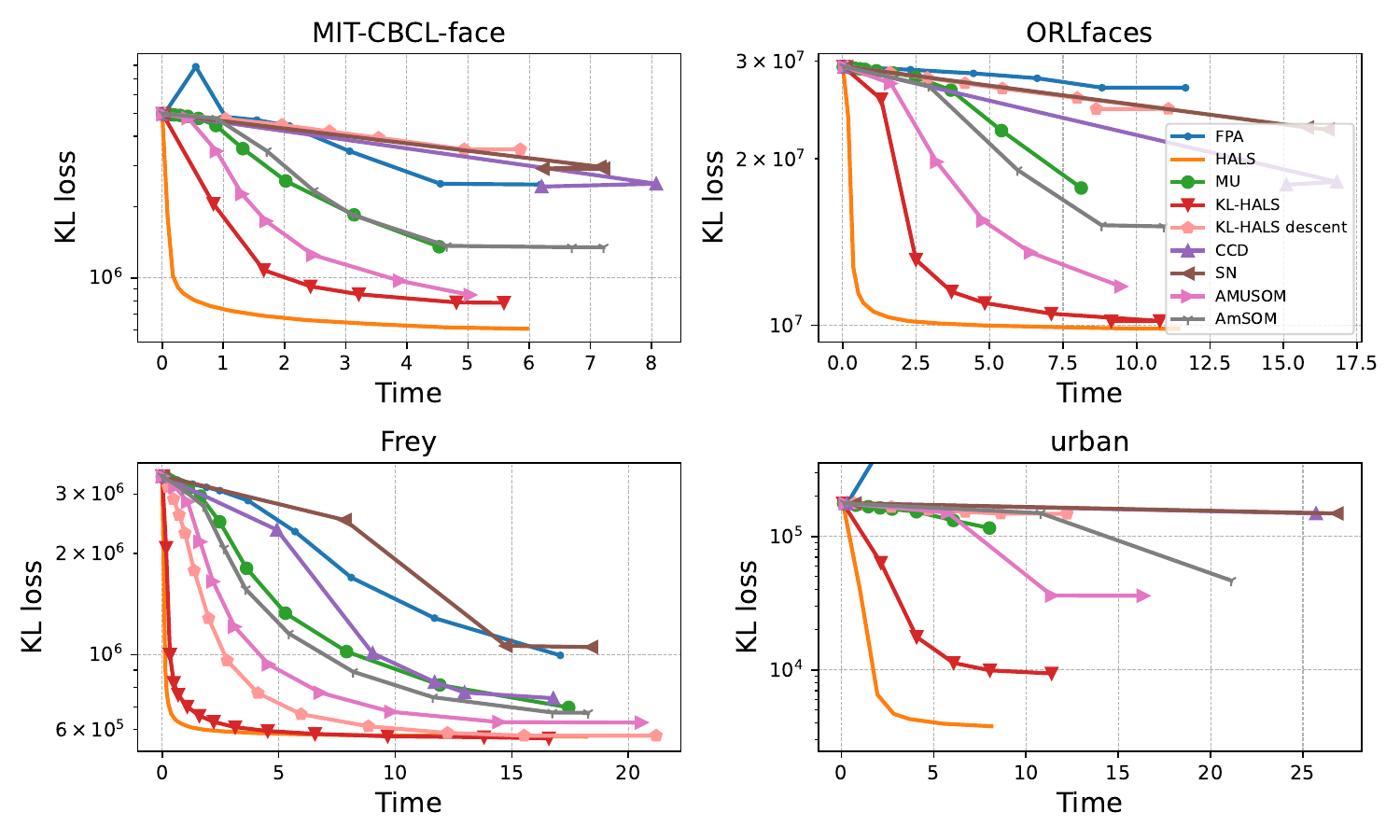}
    \caption{Median value of the KL loss on four image datasets.}
    \label{fig:res_image}
\end{figure}

On these datasets the convergence curves do not cross: algorithms that converge fast converge to a low loss value. Moreover, we can make the following comments on the results of \Cref{fig:res_image}.%
\begin{itemize}
    \item HALS is clearly the best algorithm on all tested datasets. This is interesting as it does not decreases the KL loss but the Frobenius norm. This may indicates that on this kind of datasets the Poisson and Gaussian noise models are equivalent, causing the KL and Frobenius losses to behave very similarly.%
    \item KL-HALS comes out as a clear second.
    \item KL-HALS-descent manages to be competitive only on the Frey dataset. On other datasets it does not descent sufficiently.
     \item AMUSOM is the third best algorithm.
     \item MU and AmSOM exhibit very close behavior and are not amongst the best algorithms.
     \item FPA, CCD and SN are the worst algorithms: they either struggle to converge, struggle to descent, or converge slower than other algorithms.
\end{itemize}
To summarize KL-HALS is the second best algorithm after HALS on image datasets. 

\subsubsection{Document datasets}
\label{sec:exp_sparse}
A document dataset consists of a collection of columns each representing a text document. The rows of the matrix represent words, and entries count how many times each word appears in each document. These datasets are large, with $M$ and $N$ in the order of $10^3$. They are also very sparse: on average only 1.86\% of entries were non-zero for the datasets we used. We thus utilized the remarks made in \Cref{sec:code_sparse} and our code takes into account the sparsity of $\mV$ to compute iterates efficiently. Like explained in \Cref{sec:code_sparse}, producing such a code for FPA is not possible due to its use of large and dense dual variables. In consequence we do not represent any result for the FPA algorithm in this Section.
Note that we removed zero rows corresponding to a word never appearing in any document of the collection. %
In consequence the size of matrices we used might not match the dimensions of the original dataset (see \Cref{tab:real_data_sets}).

Our first collection of results is obtained by running algorithms on all the CLUTO dataset \cite{cluto}. We will not present convergence curves for the 24 matrices in the collection. Instead we present in \Cref{fig:res_CLUTO} a performance plot. For a collection of relative errors (x axis) in the range $[10^{-6},1]$ the graph shows the percentage of time each algorithm has achieved this given relative error compared to the best algorithm. In consequence, the higher an algorithm is in the graph the better the loss it achieves.

\begin{figure}[h!]
    \centering
    \includegraphics[width=0.6\linewidth]{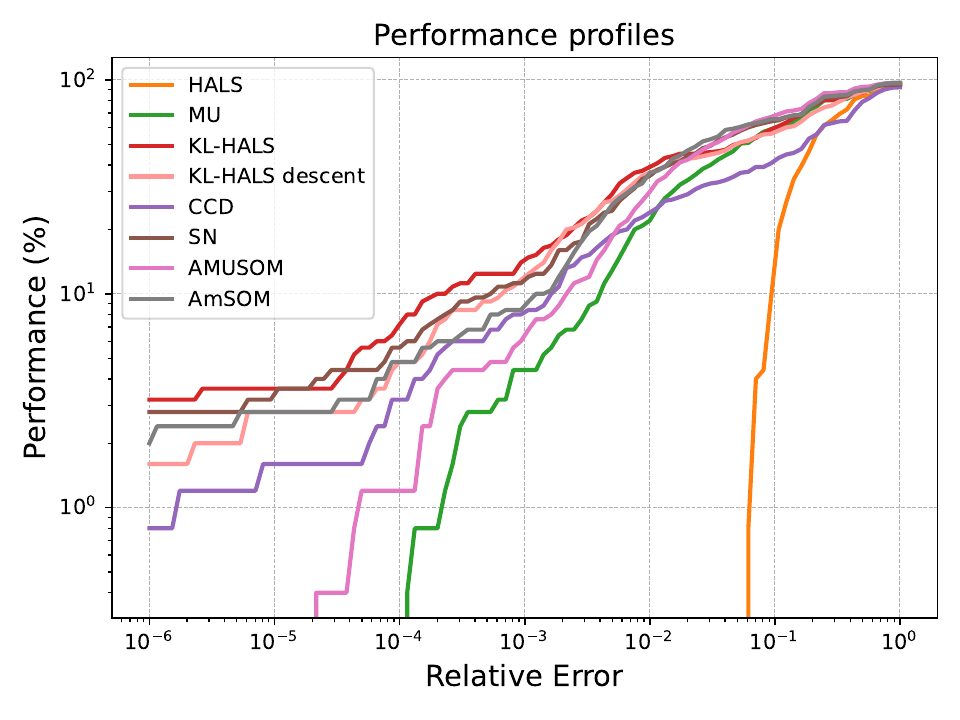}
    \caption{Performance of algorithms on the CLUTO dataset collection. %
    }%
    \label{fig:res_CLUTO}
\end{figure}

\Cref{fig:res_CLUTO} shows us that in terms of the best loss value reached:%
\begin{itemize}
    \item KL-HALS is the best algorithm: it has the best performance for all relative errors.%
    \item It is followed closely by SN and KL-HALS-descent. This is very different %
    from other datasets, on which the two algorithms struggle to make sufficient progress.
    \item AmSOM also exhibits good performance.
    \item CCD, AMUSOM and MU struggle to get close to the best loss values.
    \item HALS is clearly the worst algorithm: it can only get a 100\% performance when the allowed relative error gets close to 1. This suggest that it converges to a loss value that is much higher than other algorithms. We already observed this behavior on other datasets and it seems to be more pronounced here. Note that convergence speed of algorithms is not taken into account in performance plots, only the final value reached.
\end{itemize}

We also run algorithms on the Verb dataset, which was produced by \cite{tangherlini2016mommy}
and first used for NMF algorithm evaluation in \cite{lu2026exterior}. Results for ranks $R\in\{10,20,40,80\}$ are presented in \Cref{fig:res_Verb}. %

\begin{figure}[h!]
    \centering
    \includegraphics[width=0.9\linewidth]{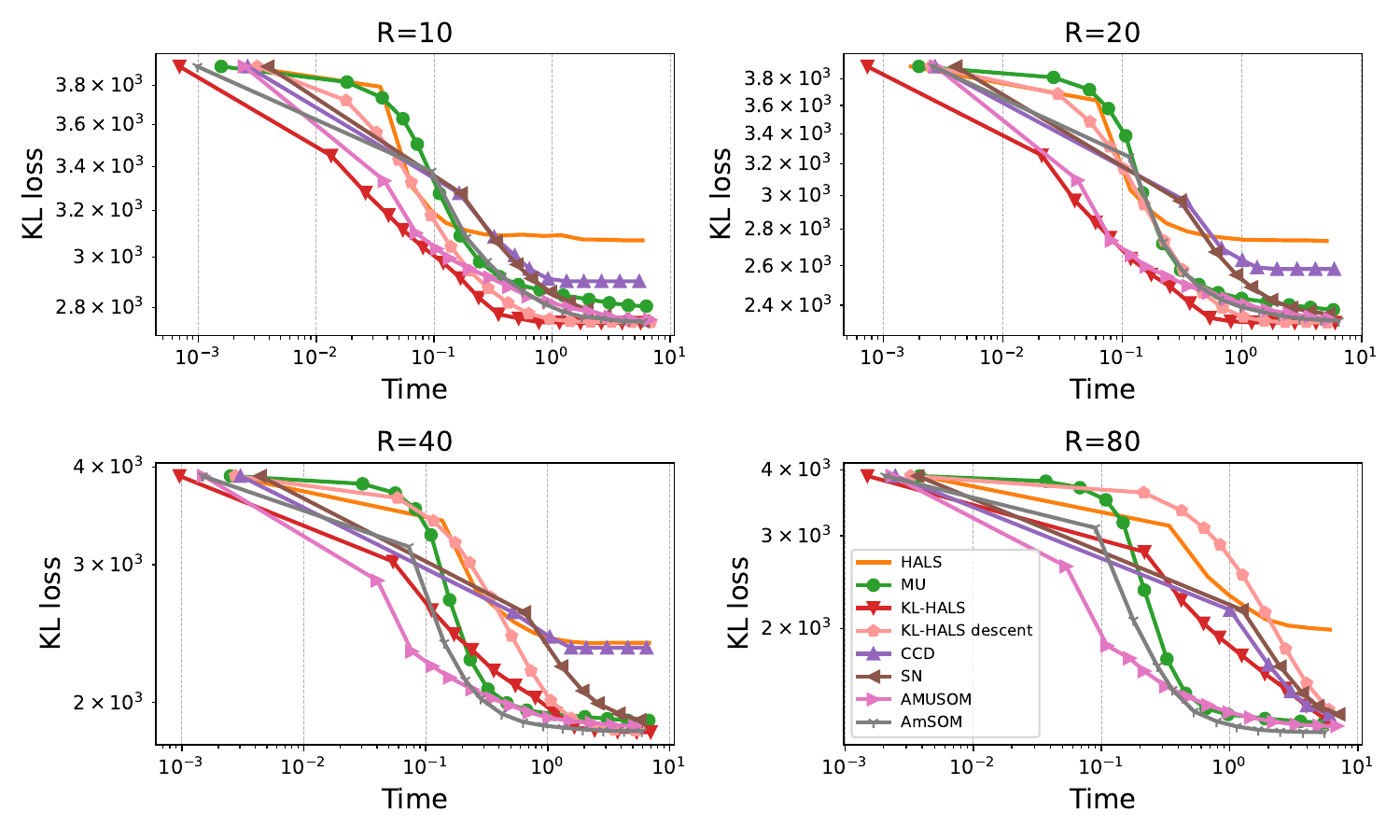}
    \caption{Median value of the KL loss on  the Verb dataset, for $R\in \{10,20,40,80\}$}
    \label{fig:res_Verb}
\end{figure}

We plot the results in log-log scale to better differentiate the position of convergence curves.
We interpret results of \Cref{fig:res_Verb} as follows:%
\begin{itemize}
    \item The results of KL-HALS degrade when the rank increases: it is clearly the best algorithm for $R=10$ but converges too slowly for $R=80$. This is very likely a consequence of the $R^2$ dependency in the cost of iterates.
    \item The convergence of KL-HALS-descent also degrades as the rank increases. This is expected as it has the same complexity than KL-HALS. We note than on this datasets it is competitive with other algorithms.
    \item HALS converges to a value much larger than other algorithms. The same behavior was observed in \Cref{fig:res_CLUTO}.
    \item CCD exhibits the same behavior than HALS except for $R=80$, where it converges to the best loss value.
    \item SN shows better convergence than CCD for all ranks.%
    \item MU starts off slowly but catches up with other algorithms toward the end.
    \item AmSOM and AMUSOM are amongst the best algorithm for all tested ranks. AMUSOM converges very fast for large ranks, while AmSOM achieves the lowest loss value
\end{itemize}

\subsection{Synthetic datasets}
\label{sec:exp_synth}
As it is a common practice, we experiment with low-rank synthetic and full-rank synthetic datasets  \cite{hien_algorithms_2021,pham2026second}. For each type of synthetic data specified below, we generate 10 random matrices. 
All algorithms will run on the same collection of matrices, and start from the same collection of initial point for each. 
In total, for each matrix generation process described below, each algorithm will run 100 times. We plot the median of all these runs.

\subsubsection{Low-rank synthetic datasets}

For matrix generation we follow the experimental set-up of Hien and Gillis \cite{hien_algorithms_2021}. Our underlying factors $\mW^*$ and $\mH^*$ are generated depending on a density parameter $l \in (0,1]$. The entries of the underlying factors are generated uniformly i.i.d. in $[0,1]$, and $1-l$ percent of entries are set to $0$. We then use either set $\mV=\mW^* \mH^*$ which corresponds to the noiseless case, or we generate each entry of $\mV$ according to a Poisson distribution of parameter $\mW^* \mH^*$. %
This models the noisy case, which is the main application of \cref{eq:NMF_KL}. We generate noisy and noiseless matrices for $l\in [1,0.9,0.3]$ and $(M,N,R) \in \{(200,200,10),(500,500,20)\}$. The results for $(N,M,R)=(200,200,10)$ are presented in \Cref{fig:res_lr_200} and for $(N,M,R)=(500,500,20)$ in \Cref{fig:res_lr_500}.

We leave the detailed analysis of these results to \Cref{sm:exp_synth_lr}: %
for $l\in\{1,0.9\}$ the ranking of algorithms is (by decreasing performance): KL-HALS, KL-HALS-descent, HALS, AMUSOM, AMSOM, CCD, SN,MU, FPA. For $l=0.3$ the ranking is: KL-HALS, AmSOM, AMUSOM, HALS, MU and then other algorithms (indistinguishable). We observe that KL-HALS is the only algorithm that converges to the best loss value for all datasets parameters.

\subsubsection{Full-rank synthetic datasets}

We again follow the experimental set-up of Hien and Gillis \cite{hien_algorithms_2021} by generating entries of $\mV$ with i.i.d. random Gaussian variables. This corresponds to the full-rank case as the data matrix does not present any low-rank structure a priori. We report results for full-rank synthetic $200\times200$ matrices enforcing $R=10$, as well as $500\times500$ matrices with $R=20$ in \Cref{fig:res_fr}. We leave the detailed discussion of these experimental results to \Cref{sm:exp_synth_fr} and simply observe that KL-HALS performs the best on both matrix sizes.

\begin{figure}[h!]
    \centering
    \includegraphics[width=0.9\linewidth]{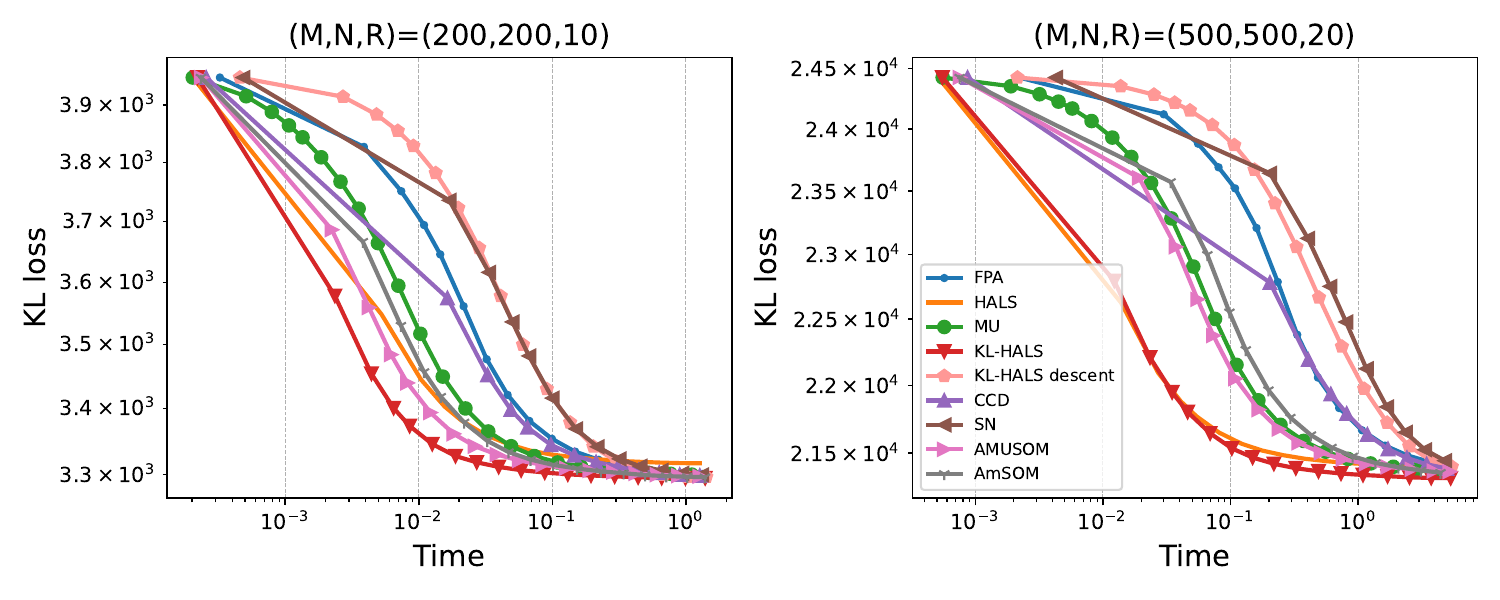}
    \caption{Median value of the KL loss on full-rank synthetic datasets: left - 200×200, right - 500×500.}
    \label{fig:res_fr}
\end{figure}

\section{Conclusion}

In this work, we present a novel algorithm for \cref{eq:NMF_KL} that utilizes the second-order development of the loss. 
We are the first work on \cref{eq:NMF_KL} proposing an algorithm that minimizes efficiently this non-separable surrogate under nonnegativity constraints. 
We invoke some results on self-concordant functions to show that there exists an easily computable step-size that makes the proposed algorithm converge. 
The large scale experiments we conducted show that our method out-performs state-of-the-art algorithms on most datasets. 
We believe that exploring non-separable surrogate is essential as there is no more margin for progress on separable upper-bounds: we prove that 
the global separable majorant used by the MU algorithm is the tightest possible. We also prove on the way the sub-optimality of the MU Burg algorithm. %

A promising research direction is to explore other kinds of non-separable surrogates, as this work shows they can lead to high-quality iterates.
We observed in experimental results that the additional time invested to compute those iterates is worthwhile, as the decrease in loss is tangible.  %
Also, some progress could be done on reducing the cost inferred by the computation and the minimization of the quadratic surrogate in our algorithm. %
Approximations of the Hessian are used in quasi-Newton methods to reduce the cost of iterates. We anticipate that similar approximation techniques could be used to make KL-HALS more competitive with first-order algorithms on datasets with high rank. 
We observed in our experiments that performing full Newton steps yields better results than shorter steps that are guaranteed to decrease the loss.
Recent results on extrapolated updates for the NMF problem \cite{ang2018accelerating,hien2025extrapolation} could motivate the use of steps even greater than one to accelerate convergence. It would also be interesting to explore whether the self-concordance descent guarantees could be strengthened in the case of \cref{eq:NNKL}, by either extending the size of the basin of quadratic convergence or by proving that there exists larger descending steps.%

\appendix

\bibliographystyle{IEEEtran}
\bibliography{references}

\appendix

\section{Details on HALS algorithm}

\label{apdx:details_HALS}

The HALS algorithm solves alternatively for each $r$
\[\min_{\vh \in \R_+^{N}} ||\mZ - \mW_{:,r} \vh^T ||_F^2, \]
where $\vh$ is the $r$th row of $\mH$ and $\mZ=\mV- \mW_{:,-r} \mH_{-r,:}$. This objective function is separable into $N$ scalar problems %
\[\min_{h_j\ge0} ||\mZ_{:,j}||_2^2 - 2\mW_{:,r}^T \mZ_{:,j} h_j + ||\mW_{:,r}||_2^2 h_j^2\]
for column indices $j \in \{1, \dots, N \}$. The minimum of a quadratic function under nonnegativity constraints is either the global minimum of the quadratic $h^*_j = \frac{\mW_{:,r}^T \mZ_{:,j}}{||\mW_{:,r}||_2^2}$ or $0$, hence
\[\argmin_{\vh \in \R_+^{N}} ||\mZ - \mW_{:,r} \vh^T ||_F^2 = \max \left(\frac{\mW_{:,r}^T \mZ}{||\mW_{:,r}||_2^2},0\right).\]

Pre-computing $\mW^T \mV$ and $\mW^T \mW$ is enough to perform the updates efficiently. Indeed, for the denominator, we have $||\mW_{:,r}||_2^2 = (\mW^T \mW)_{r,r}$. For the numerator
\begin{align*}
    \mW_{:,r}^T \mZ &= \mW_{:,r}^T (\mV- \mW_{:,-r} \mH_{-r,:})\\
&= \mW_{:,r}^T (\mV- \mW \mH + \mW_{:,r} \mH_{r,:})\\
&= (\mW^T \mV)_{:,r} - (\mW^T \mW)_{:,r} \mH + (\mW^T \mW)_{r,r}.
\end{align*}
Putting everything together gives the updates
\[\mH_{r,:} = \max \left(\mH_{r,:}+\frac{ (\mW^T \mV)_{r,:} - (\mW^T \mW)_{r,:} \mH }{(\mW^T \mW)_{r,r}},0\right).\]

The pseudo-code for the HALS algorithm factor update is given in \Cref{alg:update_H_HALS}.
\begin{algorithm}
    \caption{Algorithm for updating $\mH$ in HALS}\label{alg:update_H_HALS}
    \hspace*{\algorithmicindent} \textbf{Inputs:} Data matrix $\mV$, current factors $\ti{\mW}$ and $\ti{\mH}$ 
    \begin{algorithmic}[1]
        \STATE Compute $\ti{\mW}^T \mV$ and $\ti{\mW}^T \ti{\mW}$
        \STATE $\mH := \ti{\mH}$
        \FOR{$k=1$ to $I$}
            \FOR{$r=1$ to $R$}
                \STATE $\mH_{r,:} := \max(\mH_{r,:}+((\mW^T \mV)_{r,:} - (\mW^T \mW)_{r,:} \mH)/(\mW^T \mW)_{r,r},0)$
            \ENDFOR
        \ENDFOR
        \STATE \RETURN $\mH$
    \end{algorithmic}
\end{algorithm}

\section{Proof of \cref{prop:comp_UB}}

\label{apdx:proof_comp_UB}

    Let us first develop the MU Burg upper-bound. Recall that
    \[{\textit{UB}}_{\textit{Burg}}(\vh,\ti{\vh}) = \KL( \vv \Vert \mW \ti{\vh} ) + \langle \nabla_\vh \KL( \vv \Vert \mW \ti{\vh} ), \vh - \ti{\vh}\rangle + ||\vv||_1\mathcal{B}_\kappa(\vh,\ti{\vh})\]
    with $\nabla_\vh \KL( \vv \Vert \mW {\vh} ) = \mW^T (\1- \frac{\vv}{\mW \vh}),$
    and $\mathcal{B}_\kappa(\vx,\vy) := \sum_{r=1}^R ( \frac{x_r}{y_r} - \log(\frac{x_r}{y_r}) -1 ).$ %
    Hence, we obtain %
    \begin{multline*}{\textit{UB}}_{\textit{Burg}}(\vh,\ti{\vh}) = \sum_{i=1}^M v_i \log(v_i) - v_i + (\mW\tilde{\vh})_i - v_i \log((\mW\tilde{\vh})_i) \\+ \sum_{i=1}^M (\mW \vh)_i - (\mW \ti{\vh})_i - v_i\frac{ (\mW \vh)_i}{(\mW \ti{\vh})_i} + v_i + (\sum_{i=1}^M v_i)\sum_{r=1}^R \left( \frac{h_r}{\tilde{h_r}} - \log(\frac{h_r}{\tilde{h_r}}) -1\right),
    \end{multline*}
    which we rewrite as
    \begin{equation}
    \label{eq:MU_burg_dvlp}
        \sum_{i=1}^M v_i \log(v_i) + (\mW \vh)_i - v_i \log((\mW\tilde{\vh})_i) - v_i\frac{ (\mW \vh)_i}{(\mW \ti{\vh})_i} + v_i \sum_{r=1}^R \left( \frac{h_r}{\tilde{h_r}} - \log(\frac{h_r}{\tilde{h_r}}) -1\right).
    \end{equation}
    We now write the MU upper-bound in a similar fashion
    \begin{align*}
    \textit{UB}_{\textit{MU}}(\vh,\ti{\vh}) &= \sum_{i=1}^M v_i \log(v_i) - v_i + (\mW \vh)_i - v_i \sum_{r=1}^R \alpha_{i,r} \log(\frac{ W_{i,r} h_r}{\alpha_{i,r}}) \text{ with } \alpha_{i,r} = \frac{W_{i,r} \ti{h}_r}{(\mW \ti{\vh})_i}\\
    &=\sum_{i=1}^M v_i \log(v_i) - v_i + (\mW \vh)_i - v_i \sum_{r=1}^R \alpha_{i,r} (\log(\frac{ h_r}{\ti{h}_r}) +\log(\mW \vh)_i) \\
    &=\sum_{i=1}^M v_i \log(v_i)  + (\mW \vh)_i -v_i \log(\mW \ti{\vh})_i - v_i - v_i \sum_{r=1}^R \alpha_{i,r} \log(\frac{ h_r}{\ti{h}_r})\\
    &=\sum_{i=1}^M v_i \log(v_i)  + (\mW \vh)_i -v_i \log(\mW \ti{\vh})_i + v_i (-1- \sum_{r=1}^R \frac{W_{i,r} \ti{h}_r}{(\mW \ti{\vh})_i} \log(\frac{ h_r}{\ti{h}_r}))\\
    &=\sum_{i=1}^M v_i \log(v_i)  + (\mW \vh)_i -v_i \log(\mW \ti{\vh})_i + v_i \left( \sum_{r=1}^R \frac{W_{i,r} \ti{h}_r}{(\mW \ti{\vh})_i} (-\log(\frac{ h_r}{\ti{h}_r})-1)\right).
    \end{align*}
    The last equality comes from the fact that $\sum_r \alpha_{i,r} = 1$. Let us now focus on the last term of the sum. We add and subtract the term $\frac{ (\mW \vh)_i}{(\mW \ti{\vh})_i}$ to get
    \begin{align*}
    v_i \left( \sum_{r=1}^R \frac{W_{i,r} \ti{h}_r}{(\mW \ti{\vh})_i} (-1-\log(\frac{ h_r}{\ti{h}_r}))\right) &=  v_i \left( - \frac{ (\mW \vh)_i}{(\mW \ti{\vh})_i} + \frac{ (\mW \vh)_i}{(\mW \ti{\vh})_i}  + \sum_{r=1}^R \frac{W_{i,r} \ti{h}_r}{(\mW \ti{\vh})_i} (-\log(\frac{ h_r}{\ti{h}_r})-1)\right)\\
    &=  -v_i\frac{ (\mW \vh)_i}{(\mW \ti{\vh})_i} +v_i\left( \frac{ (\mW \vh)_i}{(\mW \ti{\vh})_i} + \sum_{r=1}^R \frac{W_{i,r} \ti{h}_r}{(\mW \ti{\vh})_i} ( -\log(\frac{ h_r}{\ti{h}_r})-1)\right)\\
    &=  -v_i\frac{ (\mW \vh)_i}{(\mW \ti{\vh})_i} +v_i\left(\sum_{r=1}^R \frac{W_{i,r} \ti{h}_r}{(\mW \ti{\vh})_i} ( \frac{h_r}{\ti{h}_r}-\log(\frac{ h_r}{\ti{h}_r})-1)\right).
    \end{align*}
    
    We get an expression very similar to the last two terms of \cref{eq:MU_burg_dvlp}. We thus know that the MU Burg upper-bound can be expressed as
     \[\KL( \vv \Vert \mW \ti{\vh} ) + \langle \nabla_\vh \KL( \vv \Vert \mW \ti{\vh} ), \vh - \ti{\vh}\rangle + \sum_{i=1}^N v_i \sum_{r=1}^R \left( \frac{h_r}{\tilde{h_r}} - \log(\frac{h_r}{\tilde{h_r}}) -1\right),\]
    while the upper-bound of MU can be written as
    \[\KL( \vv \Vert \mW \ti{\vh} ) + \langle \nabla_\vh \KL( \vv \Vert \mW \ti{\vh} ), \vh - \ti{\vh}\rangle + \sum_{i=1}^N v_i\sum_{r=1}^R \frac{W_{i,r} \ti{h}_r}{(\mW \ti{\vh})_i} \left( \frac{h_r}{\tilde{h_r}} - \log(\frac{h_r}{\tilde{h_r}}) -1\right).\]
    By concavity of the logarithm at $1$ we have that for all $x>0$, $x-\log(x)\ge 1$. Consequently for all $h_r>0$, $\frac{h_r}{\ti{h}_r} - \log(\frac{ h_r}{\ti{h}_r}) - 1 \ge 0$. We can then conclude by using the fact that for all $r$,  $\frac{W_{i,r} \ti{h}_r}{(\mW \ti{\vh})_i} \le 1$.

\section{Proof of \cref{prop:MU_tight}}

\label{apdx:proof_MU_tight}

    The upper-bound %
    $\textit{UB}_{\textit{MU}}(\vh,\ti{\vh})$ is a separable function in $\vh$. Let us note $\textit{UB}_{\textit{MU}}(\vh,\ti{\vh})) = \sum_{r=1}^R a_r(h_r)$. We assume that for all $\vh, \ti{\vh} \in \R_+^R$
    \[\KL( \vv \Vert \mW \vh ) - \textit{UB}_{\textit{MU}}(\vh,\ti{\vh}) \le \sum_{r=1}^R g_r(h_r) -a_r(h_r) \le 0,\]
    and we want to show that for all $\vh, \ti{\vh} \in \R_+^R$
    \[\sum_{r=1}^R g_r(h_r) -a_r(h_r) = 0.\]
We first observe that the upper-bound is tight on the semi-infinite line directed by $\ti{\vh}$.
\begin{lemma}
\label{lem:tightonline}
    For all $t \in \R_+$, $\KL( \vv \Vert \mW (t \ti{\vh}) ) = \textit{UB}_{\textit{MU}}(t\ti{\vh},\ti{\vh})$.
\end{lemma}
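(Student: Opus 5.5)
The plan is to substitute $\vh = t\ti{\vh}$ directly into the definition of $\textit{UB}_{\textit{MU}}$ and check that the Jensen step becomes an equality. Take $t>0$ first. For every $i$ and every $r$ with $\alpha_{i,r}>0$, we have
\[
\frac{W_{i,r} h_r}{\alpha_{i,r}} = \frac{W_{i,r}\, t\,\ti{h}_r\,(\mW\ti{\vh})_i}{W_{i,r}\ti{h}_r} = t\,(\mW\ti{\vh})_i .
\]
This value does not depend on $r$. So Jensen's inequality for the concave logarithm, applied with the weights $\alpha_{i,r}$ (which sum to one over $r$), holds with equality:
\[
\sum_{r=1}^R \alpha_{i,r}\log\Bigl(\frac{W_{i,r}h_r}{\alpha_{i,r}}\Bigr) = \log\bigl(t(\mW\ti{\vh})_i\bigr) = \log\bigl((\mW (t\ti{\vh}))_i\bigr).
\]
Multiplying by $v_i$ and summing over $i$, together with the common terms $v_i\log v_i - v_i + (\mW\vh)_i$, gives $\KL(\vv\Vert\mW(t\ti{\vh})) = \textit{UB}_{\textit{MU}}(t\ti{\vh},\ti{\vh})$.

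Alternatively, one can use the rewritten form of $\textit{UB}_{\textit{MU}}$ from the proof of \Cref{prop:comp_UB}. There, the bound equals the first-order expansion at $\ti{\vh}$ plus
\[
\sum_i v_i\sum_r \alpha_{i,r}\bigl(\tfrac{h_r}{\ti h_r}-\log\tfrac{h_r}{\ti h_r}-1\bigr).
\]
With $h_r/\ti h_r = t$ for all $r$, this extra term is $\|\vv\|_1\,(t-\log t-1)$. Since the loss along the ray is $\sum_i (t(\mW\ti\vh)_i - v_i\log t) + \text{const}$, a direct one-variable check confirms the equality. This gives a useful cross-check on the algebra.

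The main obstacle is the degenerate cases rather than the algebra. First, indices $r$ with $W_{i,r}\ti h_r=0$ make $\alpha_{i,r}=0$. These terms should be read with the convention $0\log(\cdot)=0$, which is the same convention under which $\textit{UB}_{\textit{MU}}$ is defined, so they drop out of both sides. Second, rows with $(\mW\ti\vh)_i=0$ make $\alpha_{i,r}$ undefined. If $v_i=0$, both sides reduce to $(\mW\vh)_i=0$. If $v_i>0$, both sides are $+\infty$. Third, for $t=0$ both sides equal $+\infty$ when some $v_i>0$ has $\alpha_{i,r}>0$, and otherwise they agree by the same conventions. Alternatively, $t=0$ follows by taking the limit $t\to0^+$ in the identity for $t>0$, since both sides are lower semicontinuous extended-valued functions of $t$ under the stated $\log 0$ conventions. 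I would state these conventions explicitly at the start and then give the main computation for $t>0$.
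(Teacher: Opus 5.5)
Your proof is correct and follows the paper's argument. The paper also notes that Jensen's inequality is the only inequality in $\textit{UB}_{\textit{MU}}$, and that its arguments $W_{i,r}h_r/\alpha_{i,r} = (\mW\ti{\vh})_i\, h_r/\ti{h}_r$ coincide across $r$ exactly when $h_r/\ti{h}_r$ is constant, i.e.\ on the ray $t\ti{\vh}$. Your cross-check through the rewritten bound and your handling of degenerate cases go beyond what the paper writes.

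One small slip is in your optional limit argument for $t=0$: lower semicontinuity does not carry the equality to $t\to0^+$, since two lsc functions can agree on $(0,\infty)$ and still differ at $0$. You would need continuity in the extended-real sense, which does hold here, but your direct check at $t=0$ already settles that case.
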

\begin{proof}
    We use notation from \Cref{sec:MMalgo}. The only ingredient of the upper-bound is the use of %
    Jensen's inequality on the weights $\alpha_{i,r}$ summing to $1$ and variables ${ W_{i,r} h_r}/{\alpha_{i,r}}$:
    \[  - v_i \log(\sum_{r=1}^R \alpha_{i,r}\frac{ W_{i,r} h_r}{\alpha_{i,r}}) 
\le  - v_i \sum_{r=1}^R \alpha_{i,r} \log(\frac{ W_{i,r} h_r}{\alpha_{i,r}}) \text{ with } \alpha_{i,r} = \frac{W_{i,r} \ti{h}_r}{(\mW \ti{\vh})_i}.\]

\[\text{for all } r,r' \in \{1,\dots,R\}, \quad \frac{ W_{i,r} h_r}{\alpha_{i,r}} = \frac{ W_{i,r'} h_{r'}}{\alpha_{i,r'}} \quad \]\[ \Leftrightarrow \quad \text{for all } r,r' \in \{1,\dots,R\}, \quad \frac{h_{r}}{\ti{h}_{r}} = \frac{h_{r'}}{\ti{h}_{r'}}.\]
In other words, the upper-bound is tight on the semi-infinite line $\{t \ti{\vh} \; | \; t \in \R_+\}$.
\end{proof}
This property of MU is enough to show that it is the best global separable upper-bound. We prove the the following Lemma:

\begin{lemma}
\label{lem:gap_is_0}
    Let $f_r : \R_+^R\rightarrow \R$ be a finite collection of functions such that %
     \[\text{for all } \vx \in \R_+^R, \quad \sum_r f_r(x_r)\le 0,\;\text{ and }\; \text{for all } t \in \R_+, \quad \sum_r f_r(t)= 0.\]
    Then all $f_r$ are constants on $\R_+$ and thus 
    \[\text{for all } \vx \in \R_+^R, \quad \sum_r f_r(x_r)= 0.\]
\end{lemma}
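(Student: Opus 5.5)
We work directly with the two hypotheses and compare $\vx$ with diagonal points $(t,\dots,t)$ where only one coordinate has been changed; the functions $f_r$ are read as functions of one real variable on $\R_+$, as in the statement. We may assume $R\ge 2$. For $R=1$ the second hypothesis already says $f_1\equiv 0$, and the conclusion is immediate.

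\textbf{Step 1 (each $f_r$ lies below its value on the diagonal).} Fix an index $r$ and two values $s,t\in\R_+$. Take the point $\vx$ with $x_r=s$ and $x_{r'}=t$ for every $r'\neq r$. The first hypothesis gives
\[
f_r(s)+\sum_{r'\neq r} f_{r'}(t)\le 0 .
\]
The second hypothesis at $t$ gives $\sum_{r'\neq r} f_{r'}(t) = -f_r(t)$. Substituting, we get $f_r(s)\le f_r(t)$ for all $s,t\in\R_+$.

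\textbf{Step 2 (each $f_r$ is constant).} Since $s$ and $t$ in Step 1 are arbitrary, we can swap their roles and also get $f_r(t)\le f_r(s)$. Hence $f_r(s)=f_r(t)$, so $f_r$ is equal to a constant $c_r$ on $\R_+$.

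\textbf{Step 3 (the sum vanishes everywhere).} The second hypothesis now reads $\sum_r c_r=0$. Therefore, for every $\vx\in\R_+^R$,
\[
\sum_r f_r(x_r)=\sum_r c_r=0 .
\]

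The only step that needs an idea is Step 1: we must choose test points of the form ``diagonal point with one coordinate changed'' so that the two hypotheses combine. Once that choice is made, the rest is a direct substitution.

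The lemma is then used as follows, taking $f_r = g_r - a_r$. Lemma~\ref{lem:tightonline} says $\KL = \textit{UB}_{\textit{MU}}$ on the ray $\{t\ti{\vh}\}$. The sandwich $\KL \le \sum_r g_r \le \textit{UB}_{\textit{MU}}$ then forces $\sum_r f_r = 0$ on that ray. To put this exactly in the form of the lemma, we reparametrize by $y_r = x_r/\ti{h}_r$ (assuming $\ti{\vh}>0$), which sends the ray onto the diagonal.
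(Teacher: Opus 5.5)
Your proof is correct and follows the paper's argument: you use the same test point (the diagonal point $(t,\dots,t)$ with coordinate $r$ replaced by $s$), combine it with the diagonal identity to get $f_r(s)\le f_r(t)$ for all $s,t$, and conclude that each $f_r$ is constant. The explicit swap of $s$ and $t$ in your Step 2 and your remark that the reparametrization $x_r\mapsto x_r\ti{h}_r$ used in the application needs $\ti{\vh}>0$ are small details that the paper leaves implicit.
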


\begin{proof}
    Let $i$ be a function index, we show that $f_i$ is constant on $\R_+$. First, for all $t \in \R_+$

    \[f_i(t) = - \sum_{j\neq i}f_j(t).\]
    Then let $\vx \in \R_+^R$ be such that $x_i = y$ and for all $j\neq i$, $x_j=t$ then

    \[\sum_r f_r(x_r) \le 0 \Leftrightarrow f_i(y) + \sum_{j\neq i} f_j(t) \le 0 \Leftrightarrow f_i(y) - f_i(t) \le 0,\]

    so for all $y,t \in \R_+$, $f_i(y)\le f_i(t)$, hence $f_i$ is constant on $\R_+$. This concludes the proof of \Cref{lem:gap_is_0}
\end{proof}

Let us apply \Cref{lem:gap_is_0} to the functions $f_r$ defined as
\[\text{for all } r\in\{1,\dots,R\},\text{ for all } \vx\in\R_+^R, \quad f_r(x_r) = g_r(x_r \ti{h}_r) - a_r(x_r \ti{h}_r).\]
By hypothesis
\[\text{for all } \vx \in \R_+^R, \quad \sum_r f_r(x_r)\le 0,\]
and by \Cref{lem:tightonline}
\[\text{for all } t \in \R_+, \quad \sum_r f_r(t)= 0.\]
Hence
\[\text{for all } \vx \in \R_+^R, \quad \sum_r f_r(x_r)= \sum_r g_r(x_r \ti{h}_r) - a_r(x_r \ti{h}_r)= 0,\]
which is exactly what we want to prove.

\section{Details on generalized HALS algorithm}

\label{apdx:details_general_HALS}

The update formula is derived in a very similar fashion to the original HALS algorithm. We want to solve the following problem for a single row of $\mH$ at a time

\[ \min_{\mH \in \R_+^{R \times N}} \sum_{j=1}^N {\mA_{:,j}}^T\mH_{:,j} + \frac{1}{2} \mH_{:,j}^T \tT_{:,:,j} \mH_{:,j}.\]

For a fixed index $r$ let us make the dependency in the $k$th row of $\mH$ more explicit

\[\sum_{j=1}^N {\mA_{:,j}}^T\mH_{:,j} + \frac{1}{2} \mH_{:,j}^T \tT_{:,:,j} \mH_{:,j} = \sum_{j=1}^N \sum_{r=1}^R \emA_{r,j}\emH_{r,j} + \frac{1}{2} \sum_{r=1}^R \sum_{r'=1}^R\emH_{r,j} \etT_{r,r',j} \emH_{r',j}\]
\[= \sum_{j=1}^N C_j + (\emA_{k,j}+\sum_{r\neq k} \etT_{k,r,j} \emH_{r,j})\emH_{k,j} + \frac{1}{2}\etT_{k,k,j} \emH_{k,j}^2, \]
where $C_j$ is a constant.
The minimum of a quadratic function under nonnegativity constraints is either the global minimum of the quadratic or $0$, hence
\begin{align*}
\emH_{k,j} &= \max\left(-\frac{\mA_{k,j}+\sum_{r\neq k} \etT_{k,r,j} \emH_{r,j}}
{\etT_{k,k,j}},0\right)\\
&= \max\left(-\frac{\mA_{k,j}+\sum_{r=1}^R \etT_{k,r,j} \emH_{r,j} - \etT_{k,k,j} \emH_{k,j}}
{\etT_{k,k,j}},0\right)\\
&= \max\left(\emH_{k,j}-\frac{\mA_{k,j}+\sum_{r=1}^R \etT_{k,r,j} \emH_{r,j} }
{\etT_{k,k,j}},0\right).
\end{align*}

One can use fast vectorized operations in a library like \texttt{numpy} and compute the update as

\[\mH_{r,:} = \max(\mH_{r,:} - (\mA_{r,:}+\text{sum\_columns}(\tT_{r,:,:} \odot \mH))/(\tT_{r,r,:}),0).\]

\section{Proof of \cref{prop:descent_selfconc}}%
\label{apdx:self-conc}

We give here a reformulation of the original proof~\cite{tran2015composite} for self-containment. It is a simplification of their more general result. We do not claim any originality in this rewriting. %

By optimality of $\vs^k$ in \cref{eq:opti_quad_approx}
\[0 \in \nabla f(\vx^k) + \nabla^2 f(\vx^k)(\vs^k-\vx^k) + \partial g(\vs^k)\]
\[\Leftrightarrow - \nabla f(\vx^k) - \nabla^2 f(\vx^k)(\vs^k-\vx^k) \in \partial g(\vs^k)\]
\[\Rightarrow - \nabla f(\vx^k)^T (\vs^k -\vx^k) - (\vs^k-\vx^k)^T \nabla^2 f(\vx^k)(\vs^k-\vx^k) \in (\vs^k-\vx^k)^T\partial g(\vs^k).\]
Combining this with $g(\vx^k) - g(\vs^k) \ge \vv^T (\vx^k -\vs^k), \vv\in \partial g(\vs^k)$ (definition of subgradients of $g$ at $\vs^k$ and convexity of $g$) we get
\[g(\vs^k)-g(\vx^k) \le - \nabla f(\vx^k)^T (\vs^k -\vx^k) - \|\vs^k-\vx^k\|_{\vx^k}^2.\]%
The self-concordance property of $f$ implies the following local upper-bound \cite{nesterov2013introductory}: for all $\vx,\vy \in \text{dom}(f)$ such that $\|\vy-\vx\|_\vx<1$
\begin{equation}
\label{eq:UB_concordance}
    f(\vy) \le f(\vx) + \nabla f(\vx)^T (\vy-\vx) + \omega_*(\|\vy-\vx\|_\vx),
\end{equation}
where $\omega_*:[0,1]\rightarrow \R_+$ is defined as $\omega_*(t):=-t-\ln(1-t)$, and $\|\va\|_\vx = \sqrt{\va^T \nabla^2 f(\vx)\va}$ is the local norm around $\vx$ with respect to $f$.

Let us assume that we will choose $\alpha_k$ such that $\|\vx^{k+1} - \vx^k\|_{\vx^k}<1 \Leftrightarrow \alpha_k \|\vs^k - \vx^k\|_{\vx^k} < 1$. Note that this is not a condition on $\vs^k$, but on $\alpha_k$, as $\vs^k$ is fixed to be a solution of \cref{eq:opti_quad_approx}. %
We can use the self-concordance upper-bound with $\vx^{k+1}$ and $\vx^k$ to upper-bound $F(\vx^{k+1})$ as

\[F(\vx^{k+1}) \le f(\vx^k) + \alpha_k \nabla f(\vx^k)^T (\vs^k-\vx^k) + \omega_* (\alpha_k \|\vs^{k} - \vx^k\|_{\vx^k}) + g(\vx^{k+1})\]
\[\le f(\vx^k) + \alpha_k \nabla f(\vx^k)^T (\vs^k-\vx^k) + \omega_* (\alpha_k \|\vs^{k} - \vx^k\|_{\vx^k}) + g(\vx^k) + \alpha_k(g(\vs^k)-g(\vx^k))\]
\[\le F(\vx^k) + \alpha_k \nabla f(\vx^k)^T (\vs^k-\vx^k) + \omega_* (\alpha_k \|\vs^{k} - \vx^k\|_{\vx^k}) + \alpha_k (- \nabla f(\vx^k)^T (\vs^k -\vx^k) - \|\vs^k-\vx^k\|_{\vx^k}^2)\]
\[=  F(\vx^k) + \omega_* (\alpha_k \|\vs^{k} - \vx^k\|_{\vx^k}) - \alpha_k \|\vs^k-\vx^k\|_{\vx^k}^2,\]

\noindent
where the second inequality comes from the convexity of $g$ and the third from the optimality of $\vs^k$. The quantity $\|\vs^k-\vx^k\|_{\vx^k}$ is the Newton decrement $\lambda_k$, so we can write.
\[F(\vx^k+\alpha_k \vd^k) \le F(\vx^k) + \omega_* (\alpha_k \lambda_k) - \alpha_k \lambda_k^2.\]
We can interpret this result as follows: we have a local upper-bound on $F$ along the direction of $\vs^k-\vx^k$ which holds for all $\alpha_k$ such that $\alpha_k \lambda_k < 1$.  
By optimizing the upper-bound with respect to $\alpha_k$, we get that the best step is $\alpha_k^* = 1/(1+\lambda_k)$. We can check a posteriori that such $\alpha_k$ respects the condition $\alpha_k \lambda_k < 1$, and we get by direct computation
\[F(\vx^{k+1}) \le F(\vx^k) - \omega(\lambda_k),\]
where $\omega : \R \rightarrow \R_+$ is defined as $\omega(t):=t-\ln(1+t )$.

Let us clearly state the difference between Majorization-Minimization and this method. In MM we optimize directly a tight upper-bound of the loss. Here we optimize a quadratic approximation of $f$ which is not an upper-bound of the loss. The only thing that optimizing on this surrogate will guarantee is that the optimum $\vs^k$ verifies
\[\nabla f(\vx^k)^T (\vs^k-\vx^k) + g(\vs^k)-g(\vx^k) \le - \|\vs^k-\vx^k\|_{\vx^k}^2.\]
This and the self-concordance property of $f$ allows us to get an upper-bound on $F$ along the direction $\vd^k$ as
\[F(\vx^k+\alpha_k \vd^k) \le F(\vx^k) + \omega_* (\alpha_k \lambda_k) - \alpha_k \lambda_k^2.\] which guarantees descent by minimizing in the step size $\alpha_k$.

\section{The Block Successive Convex Approximation (BSCA) algorithm}
\label{sec:bsca}

In this section, we slightly modify the \ref{eq:NMF_KL} problem by ensuring that all entries of the factors are greater than a fixed $\epsilon >0$. That is, we consider the modified problem
\begin{equation}
    \label{eq:NMF_KL_eps}
    \min_{\mW \ge\epsilon, \mH \ge \epsilon} \KL(\mV \Vert \mW \mH),\tag{NMF-KL-\ensuremath{\epsilon}}
\end{equation}
with $\epsilon>0$ very small, usually taken to be machine precision. This perturbation of the problem is discussed in \cite{hien_algorithms_2021}. It is  common in the literature as it avoids the zero-locking phenomena \cite{gillis2020nonnegative}, that is the factor entries being stuck at zero and enable to change due to the multiplicative nature of the update algorithm. It also makes studying the problem easier as the objective function is now continuous and continuously differentiable on the whole search space. This is the main reason for which we restrict ourselves to this perturbation in this Section. Remark that our algorithm can easily be adapted to solve problem \ref{eq:NMF_KL_eps} by changing the maximum with $0$ at Line \ref{line:update_H} of \Cref{alg:KL-HALS} to a maximum with $\epsilon$.

We show using the results of Razaviyayn et al. \cite{razaviyayn2013unified} that with a right choice of step at Line \ref{line:KL-HALS:step_size_selection} in \Cref{alg:KL-HALS} the alternating updates of $\mW$ and $\mH$ will converge to a stationary point of \ref{eq:NMF_KL_eps}. The BSCA algorithm \cite{razaviyayn2013unified} solves the problem
\[\min_{\vx=(\vx_1,\dots,\vx_n)} f(\vx_1,\dots,\vx_n) \text{ such that } \vx_i \in \mathcal{X}_i, i=1,\dots,n,\]
where $\mathcal{X}_i \subset \R^{m_i}$ are closed convex sets, and $f: \prod_{i=1}^n \mathcal{X}_i \rightarrow \R $ is continuous. This encompasses the \ref{eq:NMF_KL_eps} problem by taking $n=2$. The algorithm cycles through each block, decreasing the loss with respect to a single block while others stay fixed. This is thus an alternating algorithm similar to ours. The $i$th block is updated by minimizing a convex approximation of the function restricted to $\vx_i$. Let $h_i(\vx,\vy)$ denote a convex approximation function for the $i$th block. Suppose that $h_i(\vx_i,\vx) $ is a first order approximation of $f(\vx)$ at each point $\vx$, in other words for all $\vx \in \prod_{i=1}^n \mathcal{X}_i$ %
\begin{equation}
\label{eq:first_order_approx}
    h_i'(\vy_i,\vx;\vd_i) \rvert_{\vy_i=\vx_i} = f'(\vx;\vd), \quad \text{for all } \vd = (0,\dots,\vd_i,\dots,0) \text{ with } \vx_i + \vd_i \in \mathcal{X}_i,
\end{equation}
where $f'(\vx;\vd)$ denotes the $\vd$ directional derivative and is defined as
\[f'(\vx;\vd) \triangleq {\lim \inf}_{\lambda \downarrow 0} \frac{f(\vv+\lambda\vd)-f(\vx)}{\lambda}.\]%
Recall that a stationary point of $f$ is an $\vx$ such that $f'(\vx;\vd)\ge0$ for all $\vd$ such that $\vx+\vd$ belongs to $\mathcal{X}$.

\begin{theorem}[\cite{razaviyayn2013unified}, Theorem 4]
\label{thm:BSCA}
    Suppose that $f$ %
    is continuously differentiable and \cref{eq:first_order_approx} holds. Furthermore, assume that $h(\vx,\vy)$ is strictly convex in $\vx$ and continuous in $(\vx,\vy)$. Then every limit point of the iterates generated by the BSCA algorithm (see \Cref{alg:BSCA}) is a stationary point of the block optimization problem.
\end{theorem}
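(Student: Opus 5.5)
The statement is Theorem 4 of Razaviyayn et al.; I would reproduce the standard BSCA convergence argument for the cyclic block scheme, in three steps: monotone descent, a vanishing-step argument, and passing to the limit in the block optimality conditions.

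\textbf{Step 1: descent.} I take the BSCA framework's standing assumption that $h_i(\vy_i,\vx)$ is a global upper bound of $f$ restricted to block $i$, tight at $\vy_i=\vx_i$. This is implicit in the algorithm the theorem refers to. Then each block update gives
\[ f(\vx^{k+1}) \le h_i(\vx_i^{k+1},\vx^k) \le h_i(\vx_i^k,\vx^k) = f(\vx^k). \]
So $f(\vx^k)$ is nonincreasing. If $f$ is bounded below on the feasible set (as it is for \cref{eq:NMF_KL_eps}), $f(\vx^k)$ converges to some $f^\infty$.

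\textbf{Step 2: successive iterates merge.} Take a limit point $\bar\vx$ and a subsequence $\vx^{k_j}\to\bar\vx$. I would show $\vx^{k_j+1}-\vx^{k_j}\to 0$ by contradiction. Suppose instead that $\|\vx^{k_j+1}-\vx^{k_j}\|\ge\gamma>0$ along a further subsequence. Normalize the displacement, $\vs^{j}=(\vx^{k_j+1}-\vx^{k_j})/\|\vx^{k_j+1}-\vx^{k_j}\|$, and extract a convergent subsequence $\vs^j\to\bar\vs$. For $\epsilon\le\gamma$, convexity of $h_i$ in its first argument gives
\[ h_i(\vx_i^{k_j}+\epsilon \vs^j,\vx^{k_j}) \le h_i(\vx_i^{k_j},\vx^{k_j}), \]
and this quantity is squeezed between $f(\vx^{k_j+1})$ and $f(\vx^{k_j})$, both of which tend to $f^\infty$. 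Passing to the limit with continuity of $h_i$ yields $h_i(\bar\vx_i+\epsilon\bar\vs,\bar\vx)=h_i(\bar\vx_i,\bar\vx)$ for every $\epsilon\in[0,\gamma]$. This contradicts strict convexity of $h_i$ in its first argument. Hence consecutive iterates merge, and, cycling over the $n$ blocks, all $n$ intermediate points of a sweep converge to $\bar\vx$.

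\textbf{Step 3: stationarity.} The block minimality of $\vx_i^{k_j+1}$ gives $h_i(\vx_i^{k_j+1},\vx^{k_j})\le h_i(\vx_i,\vx^{k_j})$ for every $\vx_i\in\mathcal X_i$. Taking limits with continuity of $h_i$ and Step 2 gives $h_i(\bar\vx_i,\bar\vx)\le h_i(\vx_i,\bar\vx)$. So $\bar\vx_i$ minimizes the convex function $h_i(\cdot,\bar\vx)$ over $\mathcal X_i$, which means $h_i'(\vy_i,\bar\vx;\vd_i)\rvert_{\vy_i=\bar\vx_i}\ge0$ for all feasible $\vd_i$. By \cref{eq:first_order_approx} this is $f'(\bar\vx;\vd)\ge0$ for all blockwise feasible directions. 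Because $f$ is continuously differentiable and the feasible set is the product $\prod_i\mathcal X_i$, $f'(\bar\vx;\cdot)$ is linear and splits across blocks. Hence $f'(\bar\vx;\vd)\ge0$ for every feasible $\vd$, i.e.\ $\bar\vx$ is stationary.

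I expect Step 2 to be the main obstacle. The strict-convexity contradiction needs care with the cyclic structure: the limit point must be the same across all blocks in a sweep, so one argues block by block along the subsequence.
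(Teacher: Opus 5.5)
\textbf{The proposal proves a different theorem.} In Step~1 you assume that $h_i(\cdot,\vx)$ is a global upper bound of $f$ along block $i$, tight at $\vx_i$. You also assume that the new block equals the exact minimizer $\vy_i$. Neither is part of the statement, and neither is ``implicit in the algorithm''. The paper quotes this result from Razaviyayn et al.\ without proof. In that BSCA setting, $h_i$ is only required to match the directional derivatives of $f$, \cref{eq:first_order_approx}. That is exactly why \Cref{alg:BSCA} contains an Armijo backtracking and sets $\vx^r=\vx^{r-1}+\alpha^r(\vy^r-\vx^{r-1})$ instead of $\vx^r_i=\vy^r_i$.

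For this paper the distinction is essential. The surrogate is the second-order Taylor expansion of the KL loss, which is not a majorant. Without majorization, three steps of your argument fail:
\begin{itemize}
\item the chain $f(\vx^{k+1})\le h_i(\vx_i^{k+1},\vx^k)\le h_i(\vx_i^k,\vx^k)=f(\vx^k)$ in Step~1 breaks;
\item the squeeze of $h_i(\vx_i^{k_j}+\epsilon\vs^j,\vx^{k_j})$ between $f(\vx^{k_j+1})$ and $f(\vx^{k_j})$ in Step~2 loses both ends;
\item the block minimality you use in Step~3 holds for $\vy_i^{k_j}$, not for $\vx_i^{k_j+1}$.
\end{itemize}
What you have written is the argument for block successive \emph{upper-bound} minimization (BSUM), not a proof of this theorem.

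\textbf{The missing ingredient is the line-search analysis.} Here is how it goes.
\begin{itemize}
\item \emph{Descent direction.} Take $\vd=(0,\dots,\vy_i-\vx_i,\dots,0)$. Convexity of $h_i(\cdot,\vx)$, \cref{eq:first_order_approx} and minimality of $\vy_i$ give $f'(\vx;\vd)=h_i'(\vx_i,\vx;\vd_i)\le h_i(\vy_i,\vx)-h_i(\vx_i,\vx)\le 0$. By strict convexity the inequality is strict unless $\vy_i=\vx_i$. Hence the Armijo step exists and $f(\vx^r)$ is nonincreasing.
\item \emph{Vanishing Armijo product.} Take $\vx^{r_j}\to\bar{\vx}$ with a fixed block $i$ updated along the subsequence. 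Monotonicity and continuity give $f(\vx^r)\to f(\bar{\vx})$. The Armijo inequality then forces $\alpha^{r_j}f'(\vx^{r_j};\vd^{r_j})\to 0$.
\item \emph{Limit of the minimizers.} Continuity of $h$ and uniqueness of minimizers give $\vy_i^{r_j}\to\bar{\vy}_i=\argmin h_i(\cdot,\bar{\vx})$, so $\vd^{r_j}\to\bar{\vd}$. This needs boundedness of $\{\vy^r\}$, or existence of that minimizer, which the statement leaves implicit.
\item \emph{Two cases on the step size.} If $\alpha^{r_j}$ stays bounded away from zero, then $\nabla f(\bar{\vx})^T\bar{\vd}=0$. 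If $\alpha^{r_j}\to 0$, the rejected trial step $\alpha^{r_j}/\beta$, the mean value theorem and continuity of $\nabla f$ give $(1-\sigma)\nabla f(\bar{\vx})^T\bar{\vd}\ge 0$.
\item \emph{Block stationarity.} In both cases the strict inequality above forces $\bar{\vy}_i=\bar{\vx}_i$. So $\bar{\vx}_i$ minimizes $h_i(\cdot,\bar{\vx})$, and \cref{eq:first_order_approx} makes it block-stationary.
\item \emph{Passing to the next block.} $\vx^{r_j+1}-\vx^{r_j}=\alpha^{r_j}\vd^{r_j}\to 0$, since $\vd^{r_j}\to\bar{\vd}=0$ and $\alpha^{r_j}\le\alpha^{\mathrm{init}}$. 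This is how consecutive iterates merge in BSCA, not through the strict-convexity contradiction on function values in your Step~2.
\end{itemize}
Once every block is handled this way, your final argument that $\nabla f(\bar{\vx})^T\vd$ splits over $\prod_i\mathcal{X}_i$ finishes the proof.
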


\begin{algorithm}
    \caption{Pseudo-code for the BSCA algorithm \cite{razaviyayn2013unified}}\label{alg:BSCA}
    \hspace*{\algorithmicindent} \textbf{Inputs:} A feasible point $\vx^0$, $\sigma \in (0,1)$
    \begin{algorithmic}[1]
        
        \STATE $r:=0$
        \WHILE{convergence not reached}
            \STATE $r=r+1$, $i=(r \mod n)+1$
            \STATE Find $\vy_i^r \in \arg \min_{\vx_i \in \mathcal{X}_i} h_i(\vx_i,\vx^{r-1})$
            \STATE Set $\vd^r = (0,\dots,\vd^r,\dots,0)$
            \STATE Armijo stepsize rule: Choose $\alpha^{\text{init}}>0$ and $\beta\in(0,1)$. Let $\alpha^r$ be the largest element in $\{\alpha^{\text{init}}\beta^j\}_{j=0,1,\dots}$ satisfying:\[f(\vx^r)-f(\vx^r+\alpha^r \vd^r) \ge - \sigma \alpha^r f'(\vx^r;\vd^r)\] \label{line:bsca_armijo}
            \STATE Set $\vx^r = \vx^{r-1} + \alpha^r(\vy^r-\vx^{r-1})$
        \ENDWHILE
    \end{algorithmic}
\end{algorithm}

In \Cref{alg:KL-HALS} we use as convex approximations $h_i$ the second-order Taylor expansion of our loss function restricted to block $i$. One can check immediately that they verify \cref{eq:first_order_approx}. However, the strict convexity assumption is not always respected: in our context it is equivalent to the Hessian matrix $\mH = \mW^T \diag(\frac{\vv}{\mW \ti{\vh}}) \mW$ being positive definite. For example if $\vv$ has less than $R$ non-zero entries then the rank of $\mH$ is less than $R$ and it is not positive definite. Also if $\text{rank}(\mW) < R $ then $\text{rank}(\mH) < R $ and the assumption on surrogate functions is not respected. 
The proof of \Cref{thm:BSCA} ensures that there exists an Armijo step at Line \ref{line:bsca_armijo}.  \Cref{thm:BSCA} shows that if we replace Line \ref{line:KL-HALS:step_size_selection} with an Armijo step-size selection we will converge to a stationary point of \ref{eq:NMF_KL_eps}. %

\section{Optimal scaling with Sinkhorn's algorithm}

\label{apdx:opti_scaling}

It is known \cite{ho_non-negative_2008,pham2026second} that the optimal scaling for columns is the one that makes column sums of $\mW \mH$ equal to column sums of $\mV$. It can easily be proved by computing the first order optimality conditions of the problem
\[\min_{\mLambda \in \R_+^{N\times N} \text{diagonal}} \KL(\mV \Vert \mW \mH \mLambda) = \min_{\vlambda \in \R_+^N} \sum_{j=1}^N \sum_{i=1}^M - V_{i,j} \log(\lambda_j (\mW\mH)_{i,j}) + \lambda_j (\mW\mH)_{i,j}, \]
where $\mLambda = \diag(\vlambda)$. 
Setting the first derivative with respect to $\lambda_j$ to zero gives
\[\sum_{i=1}^M -V_{i,j} \frac{1}{\lambda_j} + (\mW\mH)_{i,j} = 0\Leftrightarrow \lambda_j = \frac{\sum_{i=1}^M V_{i,j}}{\sum_{i=1}^M (\mW\mH)_{i,j}}.\]
In other words one should make column sums of $\mW \mH$ equal to column sums of $\mV$. The same is true for row sums by transpose. Scaling rows and columns alternatively is exactly Sinkhorn's matrix scaling algorithm \cite{sinkhorn1967concerning}. A pseudo-code for Sinkhorn's algorithm applied to NMF is given in \Cref{alg:sinkhorn_nmf}

\begin{algorithm}
    \caption{Sinkhorn's algorithm for KL NMF\label{alg:sinkhorn_nmf}}
    \begin{algorithmic}[1]
        \STATE Given $\mV$, and the initial random factors $\mW$ and $\mH$
        \STATE $\vr$ := sum\_rows($\mV) \in \R_+^{M}$
        \STATE $\vc$ := sum\_columns($\mV) \in \R_+^{N}$
        \STATE $\va := \1_M$
        \STATE $\vb := \1_N$
        \FOR{$J$ iterations}
            \STATE $\va := \vr/(\mW(\mH\vb))$%
            \STATE $\vb := \vc/(\mH^T(\mW^T \va))$
        \ENDFOR
        \STATE \RETURN $\text{diag}(\va)\mW$,$\mH\text{diag}(\vb)$
    \end{algorithmic}
\end{algorithm}

Computing row and column sums of $\mV$ costs $O(MN)$ in the dense case and $O(\text{nnz}(\mV))$ in the sparse case. Each iterations computes only matrix-vector products and vector-vector divisions, for a cost of $O((M+N)R)$. The total cost of $J$ iterations is thus $O(\text{nnz}(\mV)+J(M+N)R)$ for sparse matrices or $O(MN+J(M+N)R)$ for dense matrices. Taking $J=10$ is usually enough to get row and column sums equal to the right value up to machine precision.

\section{Details on datasets}

The datasets that we use are listed in \Cref{tab:real_data_sets} along with their source, dimension, rank, and the time we let algorithms run on them. 

\begin{table}[h!]
    \centering
    \begin{tabular}{|c|c|c|c|c|}
    \hline
    dataset & size & nnz & rank & run time (seconds)\\
    \hline
    \multicolumn{5}{|c|}{Audio}\\
    \hline
    MyHeart \cite{fevotte2011majorization} & $129 \times 9312$ & dense & 10 & 5\\
    \makecell{MAPS\_MUS-bach\dots\\ \cite{emiya2010maps}} & $1000 \times 1450$ & dense & \{2,11\} & 5\\
    \makecell{MAPS\_MUS-bach\dots\\ \cite{emiya2010maps}} & $1000 \times 1450$ & dense & \{23,45\} & 10\\
    \hline
    \multicolumn{5}{|c|}{Images}\\
    \hline
    MIT-CBCL \cite{sung96} & $361 \times 2429$ & dense & 45 & 5\\ %
    ORLfaces \cite{samaria1994parameterisation} & $10304 \times 400$ & dense & 25 & 10\\
    Urban \cite{bioucas2012hyperspectral}& $162 \times 94249$ & dense & 6 & 15\\
    Frey \cite{frey_faces} & $560 \times 1965$ & dense & 25 & 15 \\
    \hline
    \multicolumn{5}{|c|}{Documents \cite{cluto}}  \\ %
    \hline
    cacmcisi & $4663 \times 14409$ & $83181$ & $2$ & 5\\
    classic & $7094 \times 41681$ & $223839$ & $4$ & 10\\
    cranmed & $2431 \times 31720$ & $140658$ & $2$ & 10\\
    fbis & $2463 \times 2000$ & $393386$ & $17$ & 30\\
    hitech & $2301 \times 22498$ & $346881$ & $6$ & 30\\
    k1a & $2340 \times 21839$ & $349792$ & $20$ & 30\\
    k1b & $2340 \times 21839$ & $349792$ & $6$ & 10\\
    la1 & $3204 \times 29714$ & $484024$ & $6$ & 20\\
    la2 & $3075 \times 19692$ & $455383$ & $6$ & 20\\
    la12 & $6279 \times 30125$ & $939407$ & $6$ & 30\\
    mm & $2521 \times 29973$ & $490062$ & $2$ & 20\\
    ohscal & $11162 \times 11465$ & $674365$ & $10$ & 30\\
    re0 & $1504 \times 2886$ & $77808$ & $13$ & 5\\
    re1 & $1657 \times 3758$ & $87328$ & $25$ & 10\\
    reviews & $4069 \times 36746$ & $781635$ & $5$ & 30\\
    sports & $8580 \times 27673$ & $1107980$ & $7$ & 30\\
    tr11 & $414 \times 6429$ & $116613$ & $9$ & 10\\
    tr12 & $313 \times 5804$ & $85640$ & $8$ & 10\\
    tr23 & $204 \times 5832$ & $78609$ & $6$ & 5\\
    tr31 & $927 \times 10128$ & $248903$ & $7$ & 10\\
    tr41 & $878 \times 7454$ & $171509$ & $10$ & 10\\
    tr45 & $690 \times 8261$ & $193605$ & $10$ & 10\\
    wap & $1560 \times 8460$ & $220482$ & $20$ & 20\\
    Verb \cite{lu2026exterior,tangherlini2016mommy} & $282 \times 1528$ & $11451$ & $\{10,20,40,80\}$& 5\\
    \hline
    \end{tabular}
    \caption{List of real datasets. Columns give in order: the name and source, the dimensions $(M,N)$, whether the dataset is dense and if not the number of non-zeros in the matrix, the rank $R$, the time we let algorithms run for each instance}
    \label{tab:real_data_sets}
\end{table}

\section{Details of experiments on synthetic datasets}

\subsection{Low-rank synthetic datasets}
\label{sm:exp_synth_lr}

\begin{figure}[h!]
    \centering
    \includegraphics[width=0.9\linewidth]{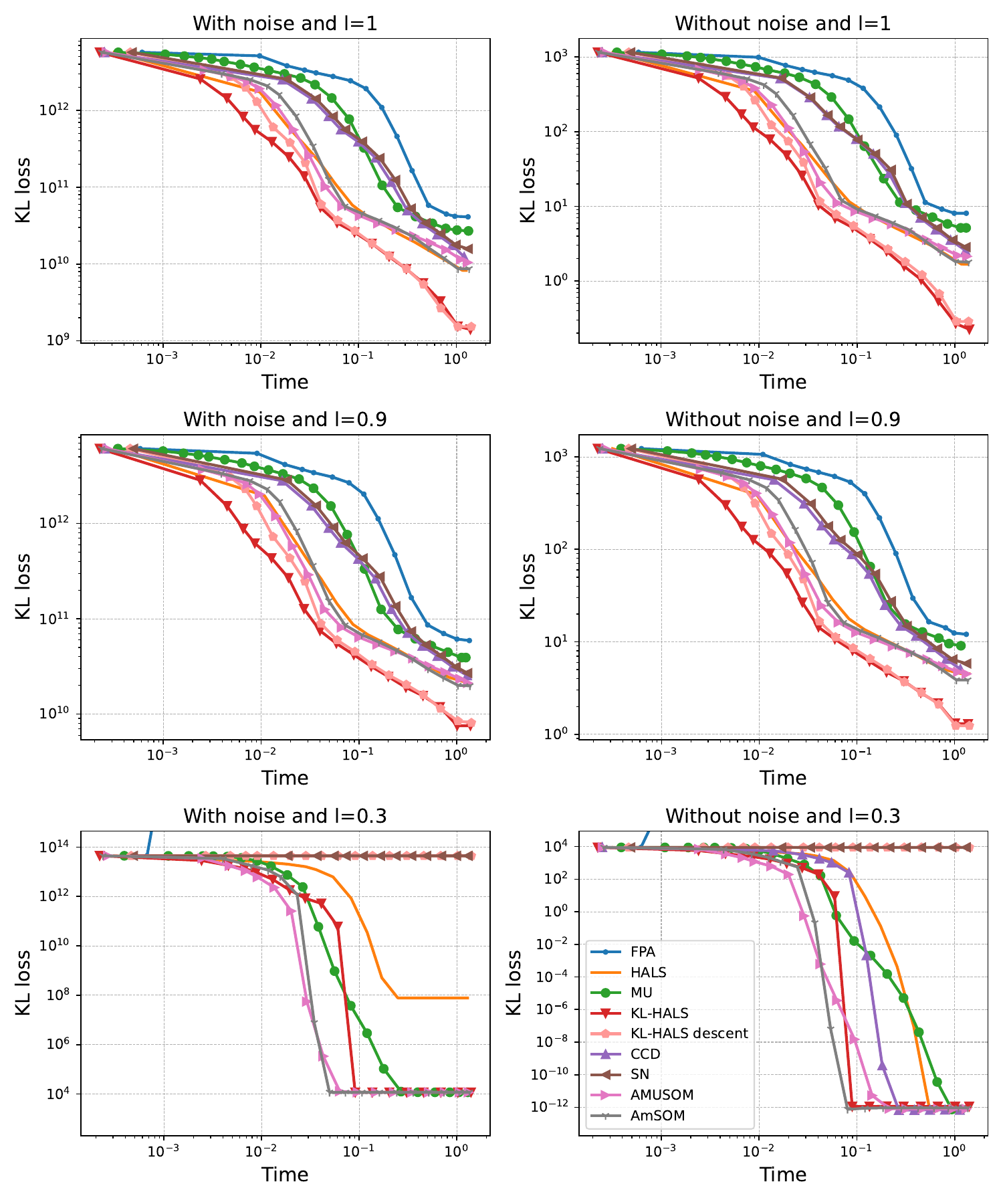}
    \caption{Median value of the KL loss on $200 \times 200$ low-rank matrices. The top row corresponds to dense factors ($l = 1$), the middle row to slightly sparse factors ($l = 0.9$), and the bottom row to very sparse factors ($l = 0.3$). The left column corresponds to noiseless matrices, the right column to noisy matrices. The curve for the CCD algorithm does not appear on the bottom left plot because it yielded \texttt{Nan} values on some of its runs.}%
    \label{fig:res_lr_200}
\end{figure}

\begin{figure}[h!]
    \centering
    \includegraphics[width=0.9\linewidth]{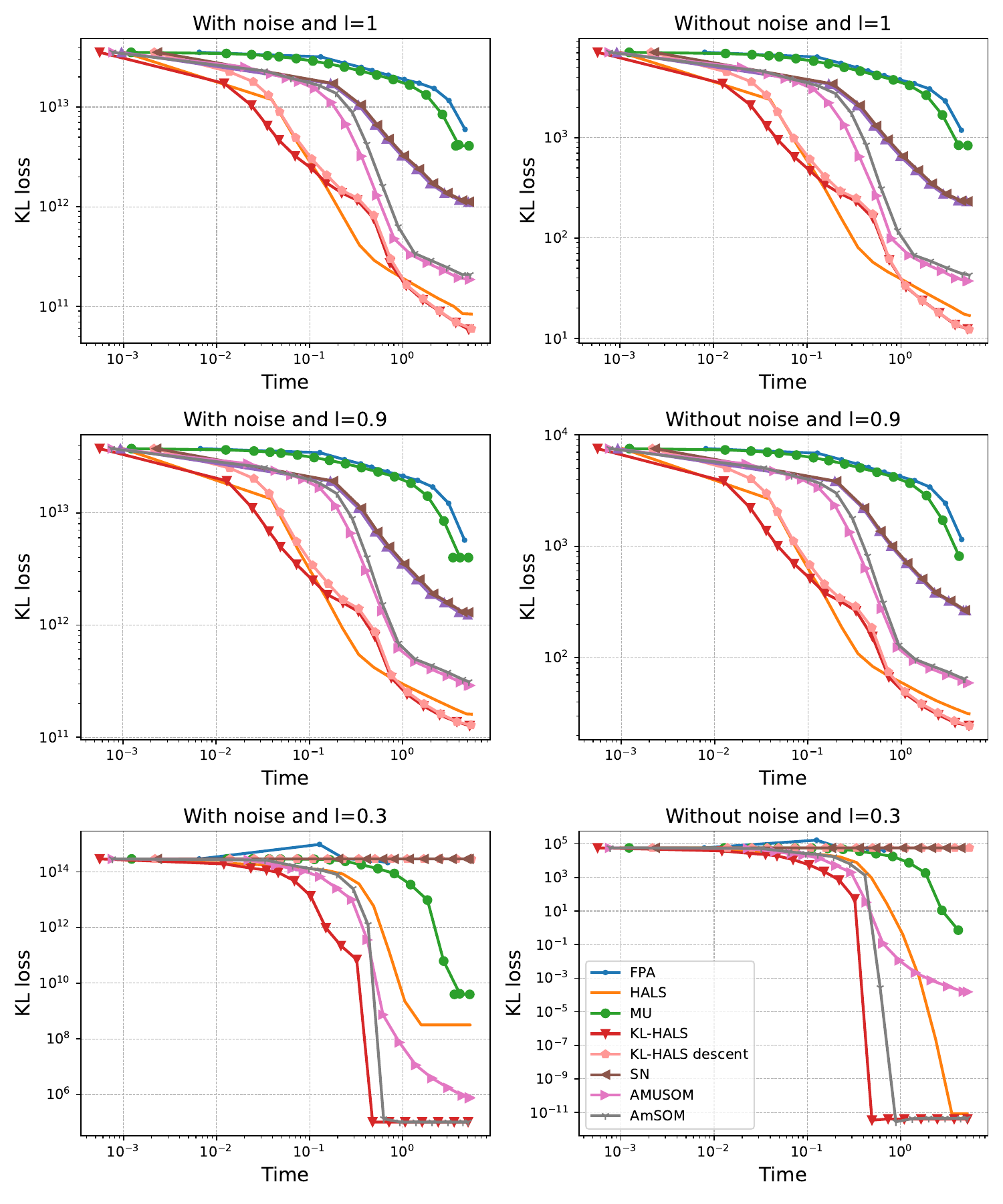}
    \caption{Median value of the KL loss on $500 \times 500$ low-rank matrices. The top row corresponds to dense factors ($l = 1$), the middle row to slightly sparse factors ($l = 0.9$), and the bottom row to very sparse factors ($l = 0.3$). The left column corresponds to noiseless matrices, the right column to noisy matrices. The curve for the CCD algorithm does not appear on the two bottom plots because it yielded \texttt{Nan} values on some of its runs.}%
    \label{fig:res_lr_500}
\end{figure}

Let us comment on the results of \Cref{fig:res_lr_200,fig:res_lr_500}:%
\begin{itemize}
    \item KL-HALS exhibits good behavior on all synthetic low-rank datasets. It converges to the lowest loss value on all instances. Only for $(M,N,R)=(200,200,10)$ it does not reach the lowest loss first, but only after AMUSOM and AmSOM.
    \item KL-HALS with the descent property follows closely on KL-HALS for $l=1$ and $l=0.9$. However for $l=0.3$ its loss decrease is too little to be seen.
    \item FPA exhibits the worse behavior for all parameters: it either has the worst convergence curve or does not converge (\Cref{fig:res_lr_200} last row).
    \item HALS is amongst the best algorithms, except for high sparsity factors ($l=0.3$) where it does not reach the best loss (for noisy data) or it reaches it later than other algorithms (noiseless case).
    \item MU is amongst the worst algorithms for factors with no or very little sparsity ($l\in\{1,0.9\}$). For high sparsity factors it converges to the best loss in the noisy case but not in the noiseless case.
    \item For $l\in\{1,0.9\}$ SN and CCD are quite close and are amongst the worst algorithms. For very sparse factors ($l=0.3$) the loss decrease of SN is not sufficient to be seen and the loss of CCD explodes on three parameter instances. For this reason the curve of CCD does not appear on the corresponding three figures. The fact that the only descending step sizes SN finds are very small could explain why CCD diverges by making large steps.
    \item AMUSOM and AmSOM follow each other closely and exhibit fast convergence on all instances. The only case AMUSOM struggles to converge to the best loss is $(M,N,R,l)=(500,500,20,0.3)$
\end{itemize}

\subsection{Full-rank synthetic datasets}

\label{sm:exp_synth_fr}

\begin{figure}[h!]
    \centering
    \includegraphics[width=0.9\linewidth]{fr.pdf}
    \caption{Median value of the KL loss on full-rank synthetic datasets: left - 200×200, right - 500×500.}
    \label{sm:fig:res_fr}
\end{figure}

Looking at the results of \cref{sm:fig:res_fr} we can observe that all algorithms converge to roughly the same loss. However their convergence speed differs. We can make the following comments on their respective convergence speed:
\begin{itemize}
    \item KL-HALS-descent and SN are the two algorithms that converge the slowest. They are the two algorithms that use the self-concordance property for choosing descending step sizes.
    \item FPA starts slower than CCD but then catches up and the two algorithms end up close to one another
    \item MU, AMUSOM and AmSOM behave roughly the same. This echos the results of Pham et al. \cite{pham2026second} who design AMUSOM and AmSOM as variations of MU.
    \item HALS converges fast for $(M,N,R)=(200,200,10)$ but to a loss slightly higher compared to other algorithms. For $(M,N,R)=(500,500,20)$ it follows KL-HALS closely with a very fast convergence speed.
    \item For both matrix sizes KL-HALS is clearly the fastest algorithm to converge.
\end{itemize}

\end{document}